\documentclass{article}

\usepackage[preprint]{neurips_2021}

\usepackage[utf8]{inputenc}

\usepackage{url}
\usepackage[T1]{fontenc}    
\usepackage[colorlinks]{hyperref}       
\usepackage{url}            
\usepackage{booktabs}       
\usepackage{amsfonts}       
\usepackage{nicefrac}       
\usepackage{microtype}      
\usepackage{amssymb}
\usepackage{amsmath}
\usepackage{amsthm}
\usepackage{natbib}
\usepackage{color}
\usepackage{graphicx}
\usepackage{colordvi}
\usepackage{mathtools}
\usepackage{subcaption}
\usepackage{bm}
\usepackage{bbold}
\usepackage{wrapfig}
\usepackage[capitalise]{cleveref}

\usepackage{natbib}

\usepackage{xcolor}
\definecolor{niceblue}{rgb}{0.10, 0.14, 0.76} 

\AtBeginDocument{%
\hypersetup{
    citecolor=niceblue,
    linkcolor=red,   
    urlcolor=niceblue}
}

\bibliographystyle{unsrtnat}

\title{Regularization in ResNet with Stochastic Depth}

\author{%
  Soufiane Hayou\thanks{Equal contribution.
  Correspondence to: \texttt{<soufiane.hayou@yahoo.fr; fadhel.ayed@huawei.com>}} \\
  Department of Statistics\\
  University of Oxford\\
  United Kingdom \\
  \And
  Fadhel Ayed$^*$ \\
  Huawei Technologies \\
  France \\
}

\newtheorem{prop}{Proposition}
\newtheorem{thm}{Theorem}
\newtheorem{lemma}{Lemma}
\newtheorem{corollary}{Corollary}

\newtheorem{assumption}{Assumption}


\newenvironment{manuallemma}[1]{%
  \manlemmin
}{\endmanlemmin}

\newenvironment{manualthm}[1]{%
  \manthmin
}{\endmanthmin}

\newenvironment{manualprop}[1]{%
  \manthmin
}{\endmanpropmin}

\newcommand{\loss}{\mathcal{L}}
\newcommand{\reals}{\mathbb{R}}
\newcommand{\normalD}{\mathcal{N}}
\newcommand{\E}{\mathbb{E}}

\newcommand{\Lp}{L_{\bm{p}}}
\newcommand{\Ldelta}{L_{\bm{\delta}}}
\newcommand{\mask}{\bm{\delta}}
\newcommand{\probs}{\bm{p}}
\newcommand{\Var}{\textup{Var}}
\newcommand{\ones}{\bm{1}}
\newcommand{\data}{\mathcal{D}}
\newcommand{\X}{\mathcal{X}}
\newcommand{\T}{\mathcal{T}}
\newcommand{\weights}{\bm{W}}

\newcommand{\neuron}{y^i_{\alpha L}}
\newcommand{\SD}{$\mathcal{SD}$}

\begin{document}

\maketitle

\begin{abstract}
Regularization plays a major role in modern deep learning. From classic techniques such as $L_1, L_2$ penalties to other noise-based methods such as Dropout, regularization often yields better generalization properties by avoiding overfitting. Recently, Stochastic Depth (\SD) has emerged as an alternative regularization technique for residual neural networks (ResNets) and has proven to boost the performance of ResNet on many tasks \citep{huang2016stochasticdepth}. Despite the recent success of \SD, little is known about this technique from a theoretical perspective. This paper provides a hybrid analysis combining perturbation analysis and signal propagation to shed light on different regularization effects of \SD. Our analysis allows us to derive principled guidelines for choosing the survival rates used for training with \SD.
\end{abstract}

\section{Introduction}

Stochastic Depth (\SD) is a well-established regularization method that was first introduced by \cite{huang2016stochasticdepth}. It is similar in principle to Dropout \citep{hinton2012improving, srivastava2014dropout} and DropConnect \citep{wan2013regularization}. It belongs to the family of noise-based regularization techniques, which includes other methods such as noise injection in data \citep{webb1994noise, bishop1995noise} and noise injection throughout the network \citep{camuto2020noiseinjection}. While Dropout, resp.  DropConnect consists of removing some neurons, resp. weights, at each iteration, \SD~randomly drops \emph{full layers}, and only updates the weights of the resulting subnetwork at each training iteration. As a result of this mechanism, \SD~can be exclusively used with residual neural networks (ResNets).

There exists a stream of papers in the literature on the regularization effect of Dropout for linear models \citep{wager2013dropout, mianjy2019dropout, helmbold2015inductive, cavazza2017dropout}. Recent work by \cite{wei2020dropout} extended this analysis to deep neural networks using second-order perturbation analysis. It disentangled the explicit regularization of Dropout on the loss function and the implicit regularization on the gradient. Similarly, \cite{camuto2020noiseinjection} studied the explicit regularization effect induced by adding Gaussian Noise to the activations and empirically illustrated the benefits of this regularization scheme. However, to the best of our knowledge, no analytical study of \SD~exists in the literature. This paper aims to fill this gap by studying the regularization effect of \SD~from an analytical point of view; this allows us to derive principled guidelines on the choice of the survival probabilities for network layers. Concretely, our contributions are four-fold:
\begin{itemize}
    \item We show that \SD~acts as an explicit regularizer on the loss function by penalizing a notion of \textit{information discrepancy} between keeping and removing certain layers.
    \item We prove that the \emph{uniform mode}, defined as the choice of constant survival probabilities, is related to maximum regularization using \SD.
    \item We study the large depth behaviour of \SD~ and show that in this limit, \SD~mimics \emph{Gaussian Noise Injection} by implicitly adding data-adaptive Gaussian noise to the pre-activations.
    \item By defining the training budget $\bar{L}$ as the \emph{desired} average depth, we show the existence of two different regimes: \emph{small budget} and \emph{large budget} regimes. We introduce a new algorithm called \emph{SenseMode} to compute the survival rates under a fixed training budget and provide a series of experiments that validates our \emph{Budget hypothesis} introduced in \cref{sec:regimes}.
\end{itemize}

\section{Stochastic Depth Neural Networks}
Stochastic depth neural networks were first introduced by \cite{huang2016stochasticdepth}. They are standard residual neural networks with random depth. In practice, each block in the residual network is multiplied by a random Bernoulli variable $\delta_l$ ($l$ is the block's index) that is equal to $1$ with some survival probability $p_l$ and $0$ otherwise. The mask is re-sampled after each training iteration, making the gradient act solely on the subnetwork composed of blocks with $\delta_l = 1$.

We consider a slightly different version where we apply the binary mask to the pre-activations instead of the activations. We define a depth $L$ stochastic depth ResNet by
\begin{equation}\label{eq:standard_resnet}
\begin{aligned}
y_0(x; \bm{\delta}) &= \Psi_0(x, W_0), \\
y_l(x; \bm{\delta}) &= y_{l-1}(x; \bm{\delta}) + \delta_l \Psi_l(y_{l-1}(x; \bm{\delta}), W_l), \quad 1\leq l \leq L,\\
y_{out}(x; \bm{\delta}) &= \Psi_{out}(y_{L}(x; \bm{\delta}), W_{out}),
\end{aligned}
\end{equation}
where $W_l$ are the weights in the $l^{th}$ layer, $\Psi$ is a mapping that defines the nature of the layer, $y_l$ are the pre-activations, and $\bm{\delta}=(\delta_l)_{1 \leq l \leq L}$ is a vector of Bernoulli variables with survival parameters $\bm{p}=(p_{l})_{1\leq l \leq L}$.
$\mask$ is re-sampled at each iteration. For the sake of simplification, we consider constant width ResNet and we further denote by $N$ the width, i.e. for all $l\in[L-1]$, $y_l \in \reals^{N}$. The output function of the network is given by $s(y_{out})$ where $s$ is some convenient mapping for the learning task, e.g. the Softmax mapping for classification tasks. We denote by $o$ the dimension of the network output, i.e. $s(y_{out}) \in \reals^{o}$ which is also the dimension of $y_{out}$. 
For our theoretical analysis, we consider a Vanilla model with residual blocks composed of a Fully Connected linear layer
$$
\Psi_l(x, W) = W \phi (x),
$$
where $\phi(x)$ is the activation function. The weights are initialized with He init \citep{he_init}, e.g. for ReLU, $W^l_{ij} \sim \mathcal{N}(0,2/N)$. 

There are no principled guidelines on choosing the survival probabilities. However, the original paper by \cite{huang2016stochasticdepth} proposes two alternatives that appear to make empirical consensus: the \emph{uniform} and \emph{linear modes}, described by
\begin{center}
  \textbf{Uniform:}\ \ $p_l = p_L$, \quad 
  \textbf{Linear:}\ \ $p_l = 1 - \frac{l}{L}(1-p_L)$, 
\end{center}
where we conveniently parameterize both alternatives using $p_L$.
\paragraph{Training budget $\bar{L}$.}  We define the training budget $\bar{L}$ to be the desired average depth of the subnetworks with \SD. The user typically fixes this budget, e.g., a small budget can be necessary when training is conducted on small capacity devices.

\paragraph{Depth of the subnetwork.} Given the mode $\bm{p}=(p_l)_{1 \leq l \leq L}$, after each iteration, the subnetwork has a depth $L_{\bm{\delta}} = \sum_{l=1}^L \delta_l$ with an average $L_{\bm{p}}:= \E_{\bm{\delta}}[L_{\bm{\delta}}] = \sum_{l=1}^L p_l$. Given a budget $\bar{L}$, there is a infinite number of modes $\probs$ such that $\Lp = \bar{L}$. In the next lemma, we provide probabilistic bounds on $L_{\bm{\delta}}$ using standard concentration inequalities. We also show that with a fixed budget $\bar{L}$, the uniform mode is linked to maximal variability.
\begin{lemma}[Concentration of  $L_{\bm{\delta}}$]\label{Lemma:bounds_L}
For any $\beta \in (0,1)$, we have that with probability at least $1-\beta$,
\begin{equation}\label{eq:bennett}
|L_{\bm{\delta}} - L_{\bm{p}}| \leq v_{\bm{p}} \,  u^{-1}\left(\frac{\log(2/\beta)}{v_{\bm{p}}}\right),
\end{equation}
where $L_{\bm{p}} = \mathbb{E}[L_{\bm{\delta}}] = \sum_{l=1}^L p_l$, $v_{\bm{p}} = \textup{Var}[L_{\bm{\delta}}] = \sum_{l=1}^L p_l(1- p_l)$, and $u(t) = (1+t) \log(1+t) - t$.

Moreover, for a given average depth $\Lp=\bar{L}$, the upperbound in \cref{eq:bennett} is maximal for the uniform choice of survival probabilities $\bm{p} = \left(\frac{\bar{L}}{L}, ..., \frac{\bar{L}}{L}\right)$.
\end{lemma}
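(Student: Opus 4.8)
The plan is to split the statement into two independent pieces: the concentration bound \cref{eq:bennett}, and the maximality of its right-hand side under the uniform mode given a fixed budget.

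For \cref{eq:bennett}, I would note that $L_{\bm{\delta}} - L_{\bm{p}} = \sum_{l=1}^L (\delta_l - p_l)$ is a sum of independent, centered random variables satisfying $|\delta_l - p_l| \le 1$ almost surely, with $\sum_{l=1}^L \E[(\delta_l - p_l)^2] = \sum_{l=1}^L p_l(1-p_l) = v_{\bm{p}}$. Bennett's inequality then gives $\mathbb{P}(L_{\bm{\delta}} - L_{\bm{p}} \ge t) \le \exp(-v_{\bm{p}}\, u(t/v_{\bm{p}}))$ for every $t \ge 0$; applying the same bound to $-(L_{\bm{\delta}} - L_{\bm{p}})$ and taking a union bound yields $\mathbb{P}(|L_{\bm{\delta}} - L_{\bm{p}}| \ge t) \le 2\exp(-v_{\bm{p}}\, u(t/v_{\bm{p}}))$. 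Since $u'(t) = \log(1+t) > 0$ for $t > 0$ and $u(0) = 0$, the map $u$ is continuous and strictly increasing on $[0,\infty)$, hence invertible there; setting the right-hand side equal to $\beta$ and solving for $t$ gives $t = v_{\bm{p}}\, u^{-1}(\log(2/\beta)/v_{\bm{p}})$, which is exactly \cref{eq:bennett}. I would separately note the degenerate case $v_{\bm{p}} = 0$ (all $p_l \in \{0,1\}$), where $L_{\bm{\delta}}$ is deterministic and the statement is vacuous in the limit.

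For the second claim, write the right-hand side of \cref{eq:bennett} as $g(v_{\bm{p}})$, where $g(v) := v\, u^{-1}(c/v)$ and $c := \log(2/\beta)$. The key step is to show $g$ is strictly increasing on $(0,\infty)$. Substituting $w = u^{-1}(c/v)$, equivalently $v = c/u(w)$, turns $g$ into $g = v w = c\, w/u(w)$; since $v \mapsto w$ is decreasing (because $u^{-1}$ is increasing and $c/v$ is decreasing), it suffices to prove that $w \mapsto w/u(w)$ is strictly decreasing on $(0,\infty)$. Differentiating, this is equivalent to $u(w) - w\, u'(w) < 0$, i.e. $(1+w)\log(1+w) - w - w\log(1+w) < 0$, i.e. $\log(1+w) < w$, which holds for all $w > 0$. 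Thus maximizing the bound over modes $\bm{p}$ with $L_{\bm{p}} = \bar{L}$ reduces to maximizing $v_{\bm{p}}$. Writing $v_{\bm{p}} = \sum_l p_l - \sum_l p_l^2 = \bar{L} - \sum_l p_l^2$, this is equivalent to minimizing $\sum_l p_l^2$ subject to $\sum_l p_l = \bar{L}$ and $p_l \in [0,1]$; by Cauchy--Schwarz (equivalently, convexity of $x \mapsto x^2$), $\sum_l p_l^2 \ge (\sum_l p_l)^2 / L = \bar{L}^2/L$ with equality precisely when all $p_l$ coincide, i.e. $p_l = \bar{L}/L$, which is feasible since $0 \le \bar{L} \le L$. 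Combining the two monotonicity statements gives the result.

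The routine parts are the invocation of Bennett's inequality and the constrained minimization of $\sum_l p_l^2$. The main obstacle is establishing the monotonicity of $g$: the useful move is the change of variables $v = c/u(w)$, which reduces everything to the elementary inequality $\log(1+w) < w$, rather than differentiating $v\, u^{-1}(c/v)$ directly (which drags in $(u^{-1})'$ and is considerably messier). A secondary point to address carefully is that this monotonicity must hold across the entire admissible range of $v_{\bm{p}}$, together with the boundary case $v_{\bm{p}} = 0$, so that ``maximal $v_{\bm{p}}$'' genuinely implies ``maximal bound''.
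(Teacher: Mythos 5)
Your proof is correct and follows the same overall strategy as the paper's: apply Bennett's inequality, show that $v \mapsto v\,u^{-1}(c/v)$ is increasing, and then maximize $v_{\bm{p}}$ under the budget constraint. The two places where you differ are in execution rather than substance. For the monotonicity, the paper differentiates $\zeta(z) = z\,u^{-1}(\alpha/z)$ directly, computing $(u^{-1})'$ and reducing the sign of the numerator to the inequality $\log(1+t) < t$; your substitution $v = c/u(w)$, which turns the claim into the decrease of $w/u(w)$, lands on the same elementary inequality but avoids manipulating $(u^{-1})'$ altogether, and is the cleaner route. For the constrained variance maximization, the paper observes that the unconstrained maximizer of $v_{\bm{p}}$ is $(1/2,\dots,1/2)$ and argues that the constrained maximizer is its orthogonal projection onto the affine slice $\{\bm{p} : L_{\bm{p}} = \bar{L}\}$ intersected with the cube, which by symmetry is the uniform mode; your Cauchy--Schwarz (equivalently, strict convexity of $x \mapsto x^2$) argument minimizing $\sum_l p_l^2$ is more elementary and has the added benefit of giving uniqueness of the maximizer directly. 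Your remarks on the degenerate case $v_{\bm{p}}=0$ and on the feasibility of $p_l = \bar{L}/L$ are points the paper passes over silently.
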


\begin{wrapfigure}{r}{0.36\textwidth}
	\centering
	\begin{subfigure}[t]{.36\textwidth}
		\centering
  \includegraphics[width=1.\textwidth]{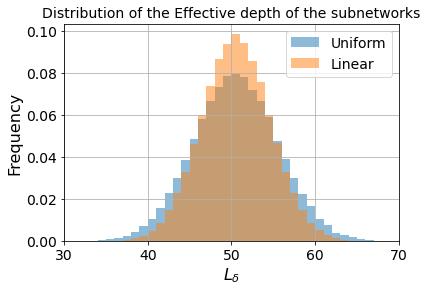}
  \caption{\small{Distributions of $\Ldelta$ for a Resnet100 with average survival rate $\bar L/ L = 0.5$ for the uniform and linear modes.}}
  \label{fig:eff_depth_distrib}
	\end{subfigure}
	\quad
	\begin{subfigure}[t]{.36\textwidth}
		    \centering
    \includegraphics[width=1.\textwidth]{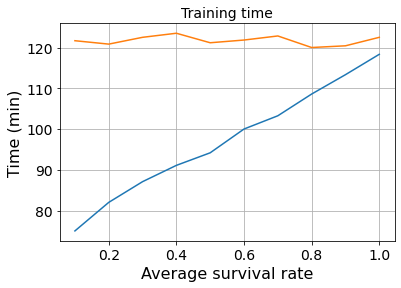}
    \caption{\small{Training time of Dropout and \SD~on CIFAR10 with ResNet56 for 100 epochs.}}
    \label{fig:training_time}
	\end{subfigure}
    \caption{}
	\vspace{-1.2cm}
\end{wrapfigure}

\cref{Lemma:bounds_L} shows that with high probability, the depth of the subnetwork that we obtain with \SD~ is within an $\ell_1$ error of $v_{\bm{p}} \,  u^{-1}\left(\frac{\log(2/\beta)}{v_{\bm{p}}}\right)$ from the average depth $\Lp$. Given a fixed budget $\bar{L}$, this segment is maximized for the uniform mode $\bm{p}=(\bar{L}/L, \dots, \bar{L}/L)$. \cref{fig:eff_depth_distrib} highlights this result. This was expected since the variance of the depth $\Ldelta$ is also maximized by the uniform mode. The uniform mode corresponds to maximum entropy of the random depth, which would intuitively results in maximum regularization. We depict this behaviour in more details in \cref{Sec:regularization}.
\paragraph{\SD~vs Dropout.} From a computational point of view, \SD~ has the advantage of reducing the effective depth during training. Depending on the chosen budget, the subnetworks might be significantly shallower than the entire network (\cref{Lemma:bounds_L}). This depth compression can be effectively leveraged for computational training time gain (\cref{fig:training_time}). It is not the case with Dropout. Indeed, assuming that the choice of dropout probabilities is such that we keep the same number of parameters on average compared to \SD, we still have to multiply matrices $L$ times during the forward/backward propagation. It is not straightforward to leverage the sparsity obtained by Dropout for computational gain. In practice, Dropout requires an additional step of sampling the mask for every neuron, resulting in longer training times than without Dropout (\cref{fig:training_time}). However, there is a trade-off between how small the budget is and the performance of the trained model with \SD~ (\cref{sec:experiments}).


\section{Effect of Stochastic Depth at initialization}\label{Sec:large_width}
Empirical evidence strongly suggests that Stochastic Depth allows training deeper models \citep{huang2016stochasticdepth}. Intuitively, at each iteration, \SD~updates only the parameters of a subnetwork with average depth $\Lp = \sum_l p_l < L$, which could potentially alleviate any exploding/vanishing gradient issue. This phenomenon is often faced when training ultra deep neural networks. To formalize this intuition, we consider the model's asymptotic properties at initialization in the infinite-width limit $N \rightarrow +\infty$. This regime has been the focus of several theoretical studies \citep{neal, poole, samuel, yang_tensor3_2020, xiaocnnmeanfield, hayou, hayou_ntk, hayou_stable_resnet} since it allows to derive analytically the distributions of different quantities of untrained neural networks. Specifically, randomly initialized ResNets, as well as other commonly-used architectures such as Fully connected Feedforward networks, convolutional networks and LSTMs, are equivalent to Gaussian Processes in the infinite-width limit. An important ingredient in this theory is the \emph{Gradient Independence} assumption. Let us formally state this assumption first.
\begin{assumption}[Gradient Independence]\label{assumption:gradient_independence}
In the infinite width limit, we assume that the weights $\weights$ used for back-propagation are an iid version of the weights $\weights$ used for forward propagation. 
\end{assumption}
\cref{assumption:gradient_independence} is ubiquitous in the literature on the signal propagation in deep neural networks. It has been used to derive theoretical results on signal propagation in randomly initialized deep neural network \citep{samuel, poole, yang2017meanfield, hayou_pruning, hayou_stable_resnet} and is also a key tool in the derivation of the Neural Tangent Kernel \citep{jacot, arora2019exact, hayou_ntk}. Recently, it has been shown by \cite{yang_tensor3_2020} that \cref{assumption:gradient_independence} yields the exact computation for the gradient covariance in the infinite width limit. See Appendix \ref{app:discussion_assumption} for a detailed discussion about this assumption. Throughout the paper, we provide numerical results that substantiate the theoretical results that we derive using this assumption. We show that \cref{assumption:gradient_independence} yields an excellent match between theoretical results and numerical experiments.

Leveraging this assumption, \cite{yang2017meanfield, hayou_stable_resnet} proved that ResNet suffers from exploding gradient at initialization. We show in the next proposition that \SD~helps mitigate the exploding gradient behaviour at initialization in infinite width ResNets.

\begin{wraptable}{r}{4.5cm}
\vspace{-2.0cm}
\caption{\small{Gradient magnitude growth rate with Vanilla Resnet50 with width 512 and training budget $\bar L / L = 0.7$. Empirical vs. Theoretical value (between parenthesis) of $\tilde{q}_l(x, z)$ at initialization, for standard (no \SD), uniform and linear modes. The expectation is performed using 500 MC samples.}}
  \label{tab:growth_07}
  \centering
  \resizebox{4.5cm}{!}{%
  \begin{tabular}{cccc}
    \toprule
     & Standard & Uniform & Linear \\
     $\ell$ & & & \\
    \midrule
0 &	2.001 (2) &	1.705 (1.7)&	1.694 (1.691) \\
10 & 2.001 (2)&	1.708 (1.7)&	1.633 (1.629)\\
20 & 2.001 (2)&	1.707 (1.7)&	1.569 (1.573) \\
30 & 2.001 (2)&	1.716 (1.7)&	1.555 (1.516) \\
40 & 1.999 (2)&	1.739 (1.7)&	1.530 (1.459)\\
    \bottomrule
  \end{tabular}
  }
  \vspace{-0.2cm}
\end{wraptable} 
\begin{prop}\label{prop:exploding_gradient}
Let $\phi=\textrm{ReLU}$ and $\loss(x,z) = \ell(y_{out}(x; \bm{\delta}), z)$ for $(x, z) \in \reals^d \times \reals^o$, where $\ell(z,z')$ is some differentiable loss function. Let $\tilde{q}_l(x,z)= \E_{W,\bm{\delta}} \frac{\lVert \nabla_{y_l} \loss \rVert^2}{ \lVert \nabla_{y_L} \loss \rVert^2}$, where the numerator and denominator are respectively the norms of the gradients with respect to the inputs of the $l^{th}$ and $L^{th}$ layers . Then, in the infinite width limit, under \cref{assumption:gradient_independence}, for all $l \in [L]$ and $(x,z) \in \reals^d \times \reals^o$, we have 
\begin{itemize}
    \item With Stochastic Depth, $ \tilde{q}_l(x,z)  = \prod_{k=l+1}^L (1+p_k)$,
    \item Without Stochastic Depth (i.e. $\mask=\ones$), $ \tilde{q}_l(x,z)  = 2^{L-l} $.
\end{itemize}
\end{prop}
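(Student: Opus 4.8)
The plan is to set up the backward recursion for the loss gradient $g_l := \nabla_{y_l}\loss$, track the growth of $\|g_l\|^2$ one block at a time, and then iterate and average over the mask. Differentiating the forward recursion in \cref{eq:standard_resnet}, the Jacobian of the $l$-th block is $\partial y_l/\partial y_{l-1} = I + \delta_l W_l D_{l-1}$, where $D_{l-1} = \mathrm{diag}(\phi'(y_{l-1}))$ is the diagonal matrix of activation derivatives, which for $\phi=\textrm{ReLU}$ is $0/1$ and satisfies $D_{l-1}^2 = D_{l-1}$. Hence $g_{l-1} = (I + \delta_l D_{l-1}W_l^\top) g_l$, and using also $\delta_l^2 = \delta_l$,
\[
\|g_{l-1}\|^2 = \|g_l\|^2 + 2\,\delta_l\, g_l^\top D_{l-1} W_l^\top g_l + \delta_l\, g_l^\top W_l D_{l-1} W_l^\top g_l .
\]

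Next I would take expectations over the backward weights and the mask. Under \cref{assumption:gradient_independence}, the matrix $W_l$ in the Jacobian is an independent copy of the forward weight, hence independent of $g_l$, $D_{l-1}$ and $\delta_l$ (note $g_l$ does not depend on $\delta_l$); since $W_l$ has centered entries, the cross term vanishes in expectation. For the last term, a direct second-moment computation with $(W_l)_{ij}\sim\mathcal{N}(0,2/N)$ gives $\E_{W_l}\big[W_l D_{l-1} W_l^\top\big] = \tfrac{2}{N}\Tr(D_{l-1})\, I$. In the infinite-width limit the pre-activations are centered Gaussian vectors (standard mean-field signal propagation, cf.\ the references in \cref{Sec:large_width}), so $\tfrac1N\Tr(D_{l-1}) = \tfrac1N\#\{i : y^i_{l-1} > 0\} \to \tfrac12$, and the quadratic term moreover concentrates around its mean $\|g_l\|^2$. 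Combining, for each fixed realization of $\mask$ one obtains, in the limit, the per-block identity $\|g_{l-1}\|^2 = (1+\delta_l)\,\|g_l\|^2$.

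Iterating this from layer $L$ down to layer $l$ — a finite product, so the limit passes through — gives $\|g_l\|^2/\|g_L\|^2 \to \prod_{k=l+1}^L(1+\delta_k)$, which is deterministic given $\mask$. Taking the remaining expectation over the independent Bernoulli variables $\delta_k$,
\[
\tilde q_l(x,z) = \E_{\mask}\Big[\prod_{k=l+1}^L(1+\delta_k)\Big] = \prod_{k=l+1}^L \E[1+\delta_k] = \prod_{k=l+1}^L(1+p_k).
\]
The no-\SD{} case is the specialization $\mask = \ones$, i.e.\ $p_k = 1$ for all $k$, which yields $\prod_{k=l+1}^L 2 = 2^{L-l}$.

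I expect the algebra to be routine and the real work to lie in the infinite-width statements: (i) that the pre-activations $y_{l-1}$ are asymptotically centered Gaussian, so that $\tfrac1N\Tr(D_{l-1})\to\tfrac12$; and (ii) that $\|g_l\|^2$ concentrates (a law of large numbers over the $N$ coordinates, which holds in the infinite-width limit under \cref{assumption:gradient_independence}), so that the per-block identity holds in the limit and telescopes cleanly. Both follow from the mean-field theory cited in \cref{Sec:large_width}; since $L$ is fixed and all the $\|g_k\|^2$ for $l \le k \le L$ stay within a bounded ($2^L$) factor of one another, the $O(1/\sqrt N)$ fluctuations introduced at each of the $L$ backward steps do not accumulate in the reported ratio $\|g_l\|^2/\|g_L\|^2$.
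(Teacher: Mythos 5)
Your proposal is correct and follows essentially the same route as the paper's proof: the backward recursion $g_{l-1}=(I+\delta_l D_{l-1}W_l^\top)g_l$, \cref{assumption:gradient_independence} to kill the cross term and evaluate $\E[W_lD_{l-1}W_l^\top]=\tfrac{2}{N}\Tr(D_{l-1})I$, the ReLU fact $\tfrac1N\Tr(D_{l-1})\to\tfrac12$, and then telescoping and averaging the independent $\delta_k$'s to get $\prod_{k=l+1}^L(1+p_k)$. The only difference is cosmetic (matrix form versus the paper's componentwise computation), and your explicit remarks on why $g_l$ is independent of $\delta_l$ and on the non-accumulation of fluctuations over the finitely many layers are consistent with, if slightly more careful than, the paper's argument.
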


\cref{prop:exploding_gradient} indicates that with or without \SD, the gradient explodes exponentially at initialization as it backpropagates through the network. However, with \SD, the exponential growth is characterized by the mode $\probs$. Intuitively, if we choose $p_l\ll 1$ for some layer $l$, then the contribution of this layer in the exponential growth is negligible since $1+p_l \approx 1$. From a practical point of view, the choice of $p_l\ll 1$ means that the $l^{th}$ layer is hardly present in any subnetwork during training, thus making its contribution to the gradient negligible on average (w.r.t $\bm{\delta}$). For a ResNet with $L=50$ and uniform mode $\probs=(1/2, \dots, 1/2)$, \SD~reduces the gradient exploding by six orders of magnitude. \cref{fig:gradgrowth_standard_0.5} and \cref{fig:gradgrowth_standard_0.7} illustrates the exponential growth of the gradient for the uniform and linear modes, as compared to the growth of the gradient without \SD. We compare the empirical/theoretical growth rates of the magnitude of the gradient in \cref{tab:growth_07}; the results show a good match between our theoretical result (under \cref{assumption:gradient_independence}) and the empirical ones. Further analysis can be found in \cref{app:further_experimental_results}.
\vspace{-0.3cm}
\paragraph{Stable ResNet.}\cite{hayou_stable_resnet} have shown that introducing the scaling factor $1/\sqrt{L}$ in front of the residual blocks is sufficient to avoid the exploding gradient at initialization, as illustrated in Figure \ref{fig:gradgrowth_stable_0.7}. The hidden layers in Stable ResNet (with \SD) are given by,
\begin{equation}\label{eq:stable_resnet}
\begin{aligned}
y_l(x; \bm{\delta}) &= y_{l-1}(x; \bm{\delta}) + \frac{\delta_l}{\sqrt{L}} \Psi_l(y_{l-1}(x; \bm{\delta}), W_l), \quad 1\leq l \leq L.\\
\end{aligned}
\end{equation}
The intuition behind the choice of the scaling factor $1/\sqrt{L}$ comes for the variance of $y_l$. At initialization, with standard ResNet (\cref{eq:standard_resnet}), we have $\Var[y_l] = \Var[y_{l-1}] + \Theta(1)$, which implies that $\Var[y_l]=\Theta(l)$. With Stable ResNet (\cref{eq:stable_resnet}), this becomes $\Var[y_l] = \Var[y_{l-1}] + \Theta(1/L)$, resulting in $\Var[y_l]=\Theta(1)$ (See \cite{hayou_stable_resnet} for more details). In the rest of the paper, we restrict our analysis to Stable ResNet; this will help isolate the regularization effect of \SD~in the limit of large depth without any variance/gradient exploding issue.
\begin{figure}	
	\centering
	\begin{subfigure}[t]{.31\textwidth}
		\centering
		\includegraphics[width=\textwidth]{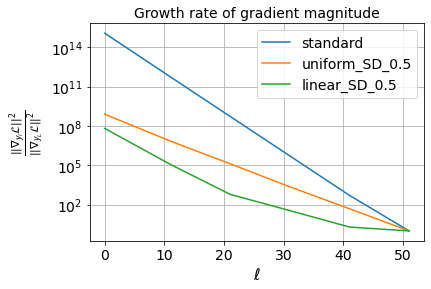}
		\caption{$\bar L / L = 0.5$, standard ResNet.}
		\label{fig:gradgrowth_standard_0.5}
	\end{subfigure}
	\begin{subfigure}[t]{.31\textwidth}
		\centering
		\includegraphics[width=\textwidth]{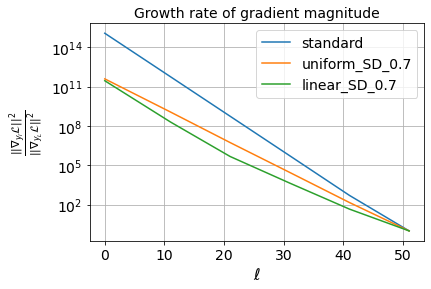}
		\caption{$\bar L / L = 0.7$, standard ResNet.}\label{fig:gradgrowth_standard_0.7}
	\end{subfigure}
	\begin{subfigure}[t]{.31\textwidth}
		\centering
        \includegraphics[width=\textwidth]{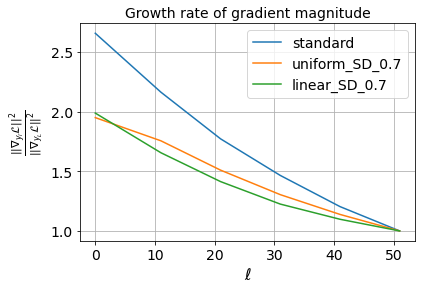}
        \caption{$\bar L / L = 0.7$, stable ResNet.}
        \label{fig:gradgrowth_stable_0.7}
	\end{subfigure}
	\caption{\small{Empirical illustration of Proposition 1 ((a) and (b)) and Stable Resnet (c). Comparison of the growth rate of the gradient magnitude $\tilde{q}_l(x, z)$ at initialization for Vanilla ResNet50 with width 512. The y-axis of figures (a) and (b) are in log scale. The y-axis of figure (c) is in linear scale. The expectation is computed using 500 Monte-Carlo (MC) samples.}}
	\vspace{-0.3cm}
\end{figure}

Nevertheless, the natural connection between \SD~and Dropout, coupled with the line of work on the regularization effect induced by the latter \citep{wager2013dropout, mianjy2019dropout, helmbold2015inductive, cavazza2017dropout, wei2020dropout}, would indicate that the benefits of \SD~are not limited to controlling the magnitude of the gradient. Using a second order Taylor expansion, \cite{wei2020dropout} have shown that Dropout induces an explicit regularization on the loss function. Intuitively, one should expect a similar effect with \SD. In the next section, we elucidate the \emph{explicit} regularization effect of \SD~on the loss function, and we shed light on another regularization effect of \SD~that occurs in the large depth limit.

\section{Regularization effect of Stochastic}\label{Sec:regularization}
\subsection{Explicit regularization on the loss function}
Consider a dataset $\data = \X \times \T$ consisting of $n$ (input, target) pairs $\{(x_i, t_i)\}_{1 \leq i \leq n}$ with $(x_i, t_i) \in \reals^d \times \reals^o$. Let $\ell:\reals^d \times \reals^o \to \reals$ be a smooth loss function, e.g. quadratic loss, crossentropy loss etc. Define the model loss for a single sample $(x,t) \in \data$ by
$$
\loss(\weights,x; \mask) =  \ell(y_{out}(x;\mask), t), \quad \loss(\weights,x) = \E_{\delta} \left[ \ell(y_{out}(x;\mask), t)\right],
$$
where $\weights=(W_l)_{0\leq l \leq L}$. The empirical loss given by
$
\loss(\weights) = \frac{1}{n} \sum_{i=1}^n \E_{\delta} \left[ \ell(y_{out}(x_i;\mask), t_i)\right].
$

To isolate the regularization effect of \SD~on the loss function, we use a second order approximation of the loss function around $\mask=\ones$, this allows us to marginalize out the mask $\mask$. The full derivation is provided in Appendix \ref{app:full_derivation_explicit_reg}. Let $z_l(x;\mask) = \Psi_l(W_l, y_{l-1}(x; \mask))$ be the activations. For some pair $(x,t) \in \data$, we obtain
\vspace{-0.4cm}
\begin{equation}\label{eq:approximation_loss}
\loss(\bm{W},x) \approx \bar{\loss}(\bm{W},x) + \frac{1}{2L} \sum_{l=1}^L p_l(1 - p_l) g_l(\weights, x),
\end{equation}
where $\bar{\loss}(\bm{W}, x) \approx \ell(y_{out}(x;\probs), t)$ (more precisely, $\bar{\loss}(\bm{W}, x)$ is the second order Taylor approximation of $\ell(y_{out}(x;\probs), t)$ around $\probs=1$\footnote{Note that we could obtain \cref{eq:approximation_loss} using the Taylor expansion around $\mask=\probs$. However, in this case, the hessian will depend on $\probs$, which complicates the analysis of the role of $\probs$ in the regularization term.}), and $g_l(\weights, x) =  z_l(x; \ones)^{T} \nabla^2_{y_l}[\ell\circ G_l](y_{l}(x;\ones)) z_l(x; \ones)$ with $G_l$ is the function defined by $y_{out}(x;\ones) = G_l(y_{l-1}(x;\ones) + \frac{1}{\sqrt{L}}z_l(x;\ones))$. 

The first term $\bar{\loss}(\weights, x)$ in \cref{eq:approximation_loss} is the loss function of the average network (i.e. replacing $\mask$ with its mean $\probs$). Thus, \cref{eq:approximation_loss} shows that training with \SD~entails training the average network with an explicit regularization term that implicitly depends on the weights $\weights$.
\vspace{-0.2cm}
\paragraph{\SD~enforces flatness.} The presence of the hessian in the penalization term provides a geometric interpretation of the regularization induced by \SD: it enforces a notion of flatness determined by the hessian of the loss function with respect to the hidden activations $z_l$. This flatness is naturally inherited by the weights, thus leading to flatter minima. Recent works by \citep{keskar2016large, jastrzebski2018relation, yao2018hessian} showed empirically that flat minima yield better generalization compared to minima with large second derivatives of the loss. \cite{wei2020dropout} have shown that a similar behaviour occurs in networks with Dropout.

Let $J_{l}(x) = \nabla_{y_l} G_l (y_{l}(x;\ones))$ be the Jacobian of the output layer with respect to the hiden layer $y_l$ with $\mask=\ones$, and $H_{\ell}(x) = \nabla^2_z \ell(z)_{|z=y_{out}(x;\ones)}$ the hessian of the loss function $\ell$. The hessian matrix inside the penalization terms $g_l(\weights, x)$ can be decomposed as in \citep{lecun2012efficient, sagun2017empirical}
$$
\nabla^2_{y_l}[\ell\circ G_l](y_{l}(x;\ones)) = J_{l}(x)^{T} H_{\ell}(x) J_{l}(x) + \Gamma_l(x),
$$
where $\Gamma$ depends on the hessian of the network output. $\Gamma$ is generally non-PSD, and therefore cannot be seen as a regularizer. Moreover, it has been shown empirically that the first term generally dominates and drives the regularization effect \citep{sagun2017empirical, wei2020dropout, camuto2020noiseinjection}. Thus, we restrict our analysis to the regularization effect induced by the first term, and we consider the new version of $g_l$ defined by
\begin{equation}\label{eq:penalization_term}
g_l(\bm{W}, x) =  \zeta_l(x, \weights)^{T} \, H_{\ell}(x) \, \zeta_l(x, \weights) =  \textup{Tr}\left(H_{\ell}(x) \, \zeta_l(x, \weights)\zeta_l(x, \weights)^{T}\right),
\end{equation}
where $\zeta_l(x, \weights)= J_{l}(x)z_l(x; \ones)$. The quality of this approximation is discussed in Appendix \ref{app:further_experimental_results}. 
\vspace{-0.2cm}
\paragraph{Information discrepancy.} The vector $\zeta_l$ represents a measure of the information discrepancy between keeping and removing the $l^{th}$ layer. Indeed, $\zeta_l$ measures the sensitivity of the model output to the $l^{th}$ layer,
$$
y_{out}(x;\ones) - y_{out}(x;\ones_{l}) \approx \nabla_{\delta_l} y_{out}(x;\mask)_{|\mask=\ones} = \zeta_l(x, \weights),
$$
where $\ones_{l}$ is the vector of $1's$ everywhere with $0$ in the $l^{th}$ coordinate.\\
With this in mind, the regularization term $g_l$ in \cref{eq:penalization_term} is most significant when the information discrepancy is well-aligned with the hessian of the loss function, i.e. \SD~penalizes mostly the layers with information discrepancy that violates the flatness, confirming our intuition above. 
\paragraph{Quadratic loss.} With the quadratic loss $\ell(z,z') = \|z-z'\|_2^2$, the hessian $H_{\ell}(x) = 2 I$ is isotropic, i.e. it does not favorite any direction over the others. Intuitively, we expect the penalization to be similar for all the layers. In this case,  we have 
$
g_l(\weights, x) = 2 \, \|\zeta_l(x)\|_2^2,
$
and the loss is given by 
\begin{equation}\label{eq:empirical_loss}
\loss(\weights) \approx \bar{\loss}(\weights) + \frac{1}{2 L} \sum_{l=1}^L p_l(1 - p_l) g_l(\weights),
\end{equation}
where $g_l(\weights) = \frac{2}{n} \sum_{i=1}^n \|\zeta_l(x_i, \weights)\|_2^2\,$~is the regularization term marginalized over inputs $\X$.

\cref{eq:empirical_loss} shows that the mode $\probs$ has a direct impact on the regularization term induced by \SD. The latter tends to penalize mostly the layers with survival probability $p_l$ close to $50\%$. The mode $\probs=(1/2,\dots,1/2)$ is therefore a universal maximizer of the regularization term, given fixed weights $\weights$. However, given a training budget $\bar{L}$, the mode  $\probs$ that maximizes the regularization term $\frac{1}{2L}\sum_{l=1}^L p_l(1-p_l) g_l(W)$ depends on the values of $g_l(\weights)$. We show this in the next lemma.

\begin{lemma}[Max regularization]\label{lemma:maximal_regularization} Consider the empirical loss $\loss$ given by \cref{eq:empirical_loss} for some fixed weights $\weights$ (e.g. $\weights$ could be the weights at any training step of SGD). Then, given a training budget $\bar{L}$, the regularization is maximal for 
$p^*_l = \min \big( 1, \max (0, \frac{1}{2} - C g_l(\weights)^{-1} ) \big),$
where $C$ is a normalizing constant, that has the same sign as $L-2 \bar{L}$. The global maximum is obtained for $p_l = 1/2$.
\end{lemma}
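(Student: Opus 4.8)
The plan is to treat the maximization of the regularization term as a concave program with one linear (budget) equality constraint and the box constraints $p_l\in[0,1]$, and to read off $p_l^*$ from the KKT conditions. First I would fix the weights $\weights$, abbreviate $g_l:=g_l(\weights)$, and note $g_l\ge 0$ since $g_l=\frac{2}{n}\sum_i\|\zeta_l(x_i,\weights)\|_2^2$. Dropping the positive constant $\frac{1}{2L}$, the problem becomes
\[
\max_{\probs}\ R(\probs):=\sum_{l=1}^L g_l\,p_l(1-p_l)\quad\text{subject to}\quad \sum_{l=1}^L p_l=\bar{L},\ \ 0\le p_l\le 1 .
\]
Each summand $p_l\mapsto g_l\,p_l(1-p_l)$ is concave (because $g_l\ge 0$), so $R$ is concave; the feasible set is a nonempty compact convex polytope whenever $\bar L\in[0,L]$, so the maximum is attained and the KKT conditions are both necessary and sufficient.

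Next I would introduce a multiplier $\lambda$ for $\sum_l p_l=\bar L$ and multipliers $\mu_l,\nu_l\ge 0$ for $p_l\ge 0$ and $p_l\le 1$. Stationarity reads $g_l(1-2p_l)-\lambda+\mu_l-\nu_l=0$ for each $l$. On the interior ($0<p_l<1$, hence $\mu_l=\nu_l=0$) this gives $p_l=\tfrac12-\tfrac{\lambda}{2g_l}$. Writing $C:=\lambda/2$, complementary slackness forces $p_l=0$ exactly when $C\ge g_l/2$ (equivalently $\tfrac12-Cg_l^{-1}\le 0$) and $p_l=1$ exactly when $C\le -g_l/2$ (equivalently $\tfrac12-Cg_l^{-1}\ge 1$). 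Combining the three cases yields the clamped formula $p_l^*=\min\!\big(1,\max(0,\tfrac12-Cg_l^{-1})\big)$. If some $g_l=0$, that layer contributes nothing to $R$ and its $p_l$ is free subject to the budget; I would simply exclude those layers from the argument or assign them ties arbitrarily.

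It then remains to (i) produce $C$ and (ii) determine its sign. For (i), the map $C\mapsto\Phi(C):=\sum_l p_l^*(C)$ is continuous and non-increasing, with $\Phi(C)\to L$ as $C\to-\infty$ and $\Phi(C)\to 0$ as $C\to+\infty$, so by the intermediate value theorem there exists $C$ with $\Phi(C)=\bar L$ for every $\bar L\in[0,L]$; this is the normalizing constant of the statement. For (ii), observe $\Phi(0)=L/2$ since $p_l^*(0)=1/2$ for all $l$; monotonicity of $\Phi$ then shows $\Phi(C)=\bar L<L/2$ forces $C>0$ and $\bar L>L/2$ forces $C<0$, i.e. $C$ has the same sign as $L-2\bar L$. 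Finally, relaxing the budget constraint, each summand $g_l\,p_l(1-p_l)$ is maximized at $p_l=1/2$, so $\probs=(1/2,\dots,1/2)$ is the global maximizer, which is the budgeted solution precisely when $\bar L=L/2$, i.e. $C=0$.

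The main obstacle is the careful bookkeeping of the box constraints in the KKT system — matching the signs of $\mu_l,\nu_l$ with the clamping thresholds for $C$ — together with the degenerate case $g_l=0$; the remaining ingredients (concavity, monotonicity of $\Phi$, and the sign of $C$) are routine.
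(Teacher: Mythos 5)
Your proof is correct and follows essentially the same route as the paper's: both reduce the problem to a concave quadratic program with a budget equality and box constraints and read off the clamped formula from the KKT conditions, which are necessary and sufficient here (the paper merely phrases the solution first as an orthogonal projection of $\frac{\sqrt{a}}{2}$, with $a_l = g_l(\weights)$, onto the feasible set before writing the same KKT system). Your argument is in fact slightly more complete, since you also establish the existence of $C$ via the intermediate value theorem applied to $\Phi(C)=\sum_l p_l^*(C)$ and derive its sign from $\Phi(0)=L/2$ and monotonicity, two points the paper's proof leaves implicit.
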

\begin{wrapfigure}{r}{0.35\textwidth}
  \centering
  \vspace{-0.6cm}
  \includegraphics[width=.35\textwidth]{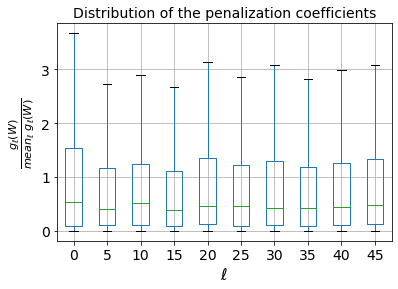}
  \caption{\small{Distribution of $g_l(\weights)$ across the layers at initialization for Vanilla ResNet50 with width 512.}}
  \vspace{-0.2cm}
  \label{fig:pen_coefficients}
\end{wrapfigure}
\cref{lemma:maximal_regularization} shows that under fixed budget, the mode $\probs^*$ that maximizes the regularization induced by \SD~is generally layer-dependent ($\neq$ uniform). However, we show that at initialization, on average (w.r.t $\weights$), $\probs^*$ is uniform.
\begin{thm}[$p^*$ is uniform at initialization]\label{thm:maximal_reg_init}
Assume $\phi=\textrm{ReLU}$ and $\weights$ are initialized with $\normalD(0, \frac{2}{N})$. Then, in the infinite width limit, under \cref{assumption:gradient_independence}, for all $l\in[1:L]$, we have 
$$\E_{\weights}[g_l(\weights)] = \E_{\weights}[g_1(\weights)].$$
As a result, given a budget $\bar{L}$, the average regularization term  $\frac{1}{2L}\sum_{l=1}^Lp_l(1-p_l)\E_{\weights}[g_l(\weights)]$ is maximal for the uniform mode $\probs^* = (\bar{L}/L, \dots, \bar{L}/L)$.
\end{thm}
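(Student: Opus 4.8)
The plan is to compute $\E_{\weights}\big[\|\zeta_l(x,\weights)\|_2^2\big]$ in closed form in the infinite-width limit and show it is independent of $l$; since $g_l(\weights)=\frac{2}{n}\sum_i\|\zeta_l(x_i,\weights)\|_2^2$, this gives $\E_\weights[g_l]=\E_\weights[g_1]$ for every $l$. Recall that for the Stable ResNet we work with, $\zeta_l=J_l(x)\,z_l(x;\ones)$ with $z_l(x;\ones)=W_l\,\phi(y_{l-1}(x;\ones))$, so
$$\|\zeta_l\|_2^2=\phi(y_{l-1})^{T}W_l^{T}J_l^{T}J_lW_l\,\phi(y_{l-1}).$$
The matrix $W_l$ is independent of $y_{l-1}$ (which uses only $W_0,\dots,W_{l-1}$) and, under \cref{assumption:gradient_independence}, independent of $J_l$ (which uses fresh i.i.d.\ copies of $W_{l+1},\dots,W_{out}$). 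Integrating over $W_l$ alone, with entries $\normalD(0,2/N)$, a one-line second-moment computation yields
$$\E_{W_l}\|\zeta_l\|_2^2=\frac{2}{N}\,\big\|\phi(y_{l-1}(x;\ones))\big\|_2^2\;\big\|J_l(x)\big\|_F^2.$$

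Next I would pass to the limit $N\to\infty$. By the standard mean-field/Gaussian-process description of randomly initialized ResNets, the pre-activations $y_l(x;\ones)$ have asymptotically i.i.d.\ centered Gaussian coordinates, so $\frac1N\|\phi(y_{l-1})\|_2^2\to\frac12 q_{l-1}(x)$ where $q_l(x):=\lim_N\frac1N\|y_l(x;\ones)\|_2^2$; and $\|J_l(x)\|_F^2$ concentrates on a deterministic limit $\kappa_l(x)$ — its dependence on the forward-weight realization washes out because the ReLU-derivative sums $\frac1N\sum_a\phi'(y_k)_a^2$ concentrate on $\tfrac12$. Hence the expectation of the product factorizes and
$$\E_\weights\|\zeta_l(x)\|_2^2\;\xrightarrow[N\to\infty]{}\;q_{l-1}(x)\,\kappa_l(x).$$

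I would then record the two one-step recursions, both of which are the usual signal-propagation recursions for Stable ResNet (cf.\ \cite{hayou_stable_resnet}) and follow from $\E[\phi(Z)^2]=\tfrac12\E[Z^2]$ and $\E[\phi'(Z)^2]=\tfrac12$ for centered symmetric $Z$ together with He initialization. Forward: $q_l=(1+\tfrac1L)q_{l-1}$, hence $q_{l-1}(x)=(1+\tfrac1L)^{\,l-1}q_0(x)$. Backward: the cross terms vanish in expectation over the fresh weights, and the remaining term contributes a factor $\tfrac1L$, giving $\kappa_{l-1}=(1+\tfrac1L)\kappa_l$, hence $\kappa_l(x)=(1+\tfrac1L)^{\,L-l}\kappa_L(x)$. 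Multiplying,
$$\E_\weights\|\zeta_l(x)\|_2^2=q_{l-1}(x)\,\kappa_l(x)=\Big(1+\tfrac1L\Big)^{L-1}q_0(x)\,\kappa_L(x),$$
which is manifestly independent of $l$: the per-layer growth of the forward signal exactly cancels the per-layer decay of the backward Jacobian norm. Averaging over the dataset gives $\E_\weights[g_l(\weights)]=\E_\weights[g_1(\weights)]$. Finally, writing $a:=\E_\weights[g_1(\weights)]$, the averaged penalty equals $\frac1{2L}\sum_l p_l(1-p_l)\,\E_\weights[g_l]=\frac{a}{2L}\sum_l p_l(1-p_l)=\frac{a}{2L}\,v_{\bm{p}}$, and maximizing $v_{\bm{p}}=\sum_l p_l(1-p_l)$ over $\bm{p}$ with $\sum_l p_l=\bar L$ is exactly the constrained optimization solved in \cref{Lemma:bounds_L}, whose maximizer is the uniform mode $\bm{p}=(\bar L/L,\dots,\bar L/L)$.

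\textbf{Main obstacle.} The delicate step is the second paragraph: rigorously justifying, in the infinite-width limit, that (i) $J_l$ may be treated as independent of $W_l$ — this is precisely where \cref{assumption:gradient_independence} enters, replacing the backward weights by i.i.d.\ copies — and (ii) $\|\phi(y_{l-1})\|_2^2$ and $\|J_l\|_F^2$ may be replaced by their deterministic limits inside the product, i.e.\ the concentration-of-measure argument letting the expectation of the product factor into the product of limits. Once these limits are secured, the recursions and the algebra are routine.
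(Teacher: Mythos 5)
Your proposal is correct and follows essentially the same route as the paper's proof: factorize $\E_{\weights}\|\zeta_l\|_2^2$ into a forward signal term and a backward Jacobian term via \cref{assumption:gradient_independence}, show the forward kernel grows like $(1+\tfrac1L)^{l-1}$ while the backward norm grows like $(1+\tfrac1L)^{L-l}$ so that the product $(1+\tfrac1L)^{L-1}Q_0(x,x)$ is independent of $l$, and then reduce the constrained maximization to that of $\sum_l p_l(1-p_l)$, solved by the uniform mode as in \cref{Lemma:bounds_L}. The only cosmetic difference is that you integrate out $W_l$ in matrix form to get $\tfrac{2}{N}\|\phi(y_{l-1})\|_2^2\|J_l\|_F^2$, whereas the paper expands coordinate-wise and uses $\E[z_l^k z_l^{k'}]=\delta_{kk'}Q_{l-1}(x,x)$ — the same computation.
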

The proof of \cref{thm:maximal_reg_init} is based on some results from the signal propagation theory in deep neural network. We provide an overview of this theory in Appendix \ref{app:overview_signal_prop}. \cref{thm:maximal_reg_init} shows that, given a training budget $\bar{L}$ and a randomly initialized ResNet with $\normalD(0, 2/N)$ and $N$ large, the average (w.r.t $\weights$) maximal regularization at initialization is almost achieved by the uniform mode. This is because the coefficients $\E_{\weights}[g_l(\weights)]$ are equal under \cref{assumption:gradient_independence}, which we highlight in \cref{fig:pen_coefficients}. As a result, we would intuitively expect that the uniform mode performs best when the budget $\bar{L}$ is large, e.g. $L$ is large and $\bar{L}\approx L$, since in this case, at each iteration, we update the weights of an overparameterized subnetwork, which would require more regularization compared to the small budget regime. We formalize this intuition in \cref{sec:regimes}. 


In the next section, we show that \SD~is linked to another regularization effect that only occurs in the large depth limit; in this limit, we show that \SD~\emph{mimics Gaussian Noise Injection methods by adding Gaussian noise to the pre-activations}.

\subsection{Stochastic Depth mimics Gaussian noise injection}
Recent work by \cite{camuto2020noiseinjection} studied the regularization effect of Gaussian Noise Injection (GNI) on the loss function and showed that adding isotropic Gaussian noise to the activations $z_l$ improves generalization by acting as a regularizer on the loss. The authors suggested adding a zero mean Gaussian noise parameterized by its variance. At training time $t$, this translates to replacing $z_l^t$ by $z_l^t + \normalD(0, \sigma_l^2 I)$, where $z_l^t$ is the value of the activations in the $l^{th}$ layer at training time $t$, and $\sigma_{l}^2$ is a parameter that controls the noise level. Empirically, adding this noise tends to boost the performance by making the model robust to over-fitting. Using similar perturbation analysis as in the previous section, we show that when the depth is large, \emph{\SD~mimics GNI by implicitly adding a non-isotropic data-adaptive Gaussian noise to the pre-activations $y_l$ at each training iteration}. We bring to the reader's attention that the following analysis holds throughout the training (it is not limited to the initialization), and does not require the infinite-width regime.

Consider an arbitrary neuron $y^i_{\alpha L}$ in the $(\alpha L)^{th}$ layer for some fixed $\alpha \in (0,1)$. $\neuron(x,\mask)$ can be approximated using a first order Taylor expansion around $\mask=\ones$. We obtain similarly,
\begin{equation}
\begin{split}
\neuron(x,\mask)
&\approx \bar{y}^{i}_{\alpha L}(x) +  \frac{1}{\sqrt{L}}\sum_{l=1}^{\alpha L} \eta_l \, \langle z_l, \nabla_{y_l} G_l^i(y_l(x;\ones)) \rangle 
\end{split}
\end{equation}


where $G_l^i$ is defined by $\neuron(x; \ones) = G_l^i(y_l(x;\ones))$, $\eta_l = \delta_l - p_l$, and  $\bar{y}^i_{\alpha L}(x) =\neuron(x,\ones) +  \frac{1}{\sqrt{L}}\sum_{l=1}^{\alpha L} (p_l - 1) \, \langle z_l, \nabla_{y_l} G_l^i(y_l(x;\ones)) \rangle \approx \neuron(x,\probs)$.

Let $\gamma_{\alpha,L}(x) = \frac{1}{\sqrt{L}}\sum_{l=1}^{\alpha L} \eta_l \, \langle z_l, \nabla_{y_l} G_l^i(y_l(x;\ones)) \rangle$. With \SD, $\neuron(x; \mask)$ can therefore be seen as a perturbed version of $\neuron(x;\probs)$ (the pre-activation of the average network) with noise $\gamma_{\alpha,L}(x)$. The scaling factor $1/\sqrt{L}$ ensures that $\gamma_{\alpha,L}$ remains bounded (in $\ell_2$ norm) as $L$ grows. Without this scaling, the variance of $\gamma_{\alpha,L}$ will generally explode.
The term $\gamma_{\alpha,L}$ captures the randomness of the binary mask $\mask$, which up to a factor $\alpha$, resembles to the scaled mean in Central Limit Theorem(CLT) and can be written as
$
\gamma_{\alpha,L}(x) = \sqrt{\alpha} \times \frac{1}{\sqrt{\alpha L}} \sum_{l=2}^{\alpha L} X_{l,L}(x)
$
where $X_{l,L}(x) = \eta_l \,\langle z_l, \nabla_{y_l} G_l^i(y_l(x;\ones)) \rangle$. Ideally, we would like to apply CLT to conclude on the Gaussianity of $\gamma_{\alpha,L}(x)$ in the large depth limit. However, the random variables $X_l$ are generally not $i.i.d$ (they have different variances) and they also depend on $L$. Thus, standard CLT argument fails. Fortunately, there is a more general form of CLT known as Lindeberg's CLT which we use in the proof of the next theorem.
\begin{thm}\label{thm:asymptotic_normality_of_noise}
Let $x \in \reals^d$, $X_{l,L}(x) = \eta_l \, \mu_{l,L}(x)$ where $\mu_{l,L}(x)= \langle z_l, \nabla_{y_l}  G^i_l(y_l(x;\ones)\rangle$, and $\sigma_{l, L}^2(x) = \textrm{Var}_\delta[X_{l, L}(x)] = p_l(1-p_l) \mu_{l,L}(x)^2$ for $l \in [L]$. Assume that

\hspace{0.6cm}$1.$ There exists $a \in (0,1/2)$ such that for all $L$, and $l \in [L]$, $p_l \in (a,1-a)$.

\hspace{0.6cm}$2.$ $\lim_{L\to \infty}\frac{\max_{k \in [L]} \mu_{k,L}^2(x)}{\sum_{l=1}^L \mu_{l,L}^2(x)} = 0$.

\hspace{0.6cm}$3.$ $ v_{\alpha, \infty}(x) := \lim_{L \to \infty} \frac{\sum_{l=1}^L \sigma_{l,L}^2(x)}{L}$ exists and is finite.

Then, \quad $
\gamma_{\alpha,L}(x) \underset{L \to \infty}{\overset{D}{\longrightarrow}} \normalD(0, \alpha \, v_{\alpha,\infty}(x)).
$
\end{thm}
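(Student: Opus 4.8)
The plan is to apply Lindeberg's CLT to the triangular array $\{X_{l,L}(x)\}_{1 \leq l \leq \alpha L}$ (with $L \to \infty$), after suitable rescaling, and then transfer the conclusion to $\gamma_{\alpha,L}(x)$ via the identity $\gamma_{\alpha,L}(x) = \frac{1}{\sqrt{L}}\sum_{l=1}^{\alpha L} X_{l,L}(x)$. First I would record the basic moment structure of the summands: since $\eta_l = \delta_l - p_l$ is centered with $\Var_\delta[\eta_l] = p_l(1-p_l)$ and $\mu_{l,L}(x)$ is deterministic (it depends only on the weights, which are held fixed), we have $\E_\delta[X_{l,L}(x)] = 0$ and $\sigma_{l,L}^2(x) = p_l(1-p_l)\mu_{l,L}^2(x)$. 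Moreover the $\eta_l$ are independent across $l$ (the Bernoulli mask coordinates are sampled independently), so $s_L^2(x) := \Var_\delta\!\big[\sum_{l=1}^{\alpha L} X_{l,L}(x)\big] = \sum_{l=1}^{\alpha L}\sigma_{l,L}^2(x)$, and by Assumption~3 (applied along the subsequence of indices up to $\alpha L$, or by noting the Cesàro limit is unchanged) $\frac{1}{L}s_L^2(x) \to \alpha\, v_{\alpha,\infty}(x)$. Hence it suffices to show $\frac{1}{s_L(x)}\sum_{l=1}^{\alpha L} X_{l,L}(x) \xrightarrow{D} \normalD(0,1)$ and then multiply by the convergent deterministic factor $s_L(x)/\sqrt{L} \to \sqrt{\alpha\, v_{\alpha,\infty}(x)}$, invoking Slutsky.

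The core step is verifying the Lindeberg condition: for every $\varepsilon > 0$,
\begin{equation}
\frac{1}{s_L^2(x)} \sum_{l=1}^{\alpha L} \E_\delta\!\left[ X_{l,L}(x)^2 \,\mathbb{1}_{\{|X_{l,L}(x)| > \varepsilon\, s_L(x)\}} \right] \xrightarrow[L\to\infty]{} 0.
\end{equation}
Here I would exploit that $|\eta_l| \leq 1$ almost surely, so $|X_{l,L}(x)| \leq |\mu_{l,L}(x)| \leq \max_{k\leq \alpha L}|\mu_{k,L}(x)| =: M_L(x)$. Therefore the event $\{|X_{l,L}(x)| > \varepsilon\, s_L(x)\}$ is empty as soon as $M_L(x) \leq \varepsilon\, s_L(x)$, i.e. once $M_L(x)^2 / s_L^2(x) \leq \varepsilon^2$. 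Now $s_L^2(x) = \sum_{l=1}^{\alpha L} p_l(1-p_l)\mu_{l,L}^2(x) \geq a(1-a)\sum_{l=1}^{\alpha L}\mu_{l,L}^2(x)$ by Assumption~1, while $M_L(x)^2 = \max_{k\leq \alpha L}\mu_{k,L}^2(x)$, so
\begin{equation}
\frac{M_L(x)^2}{s_L^2(x)} \leq \frac{1}{a(1-a)} \cdot \frac{\max_{k\leq \alpha L}\mu_{k,L}^2(x)}{\sum_{l=1}^{\alpha L}\mu_{l,L}^2(x)} \xrightarrow[L\to\infty]{} 0
\end{equation}
by Assumption~2. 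Consequently the Lindeberg sum is \emph{exactly zero} for all $L$ large enough, so the condition holds trivially; this is cleaner than the usual truncation estimate because the noise variables $\eta_l$ are bounded. (A small bookkeeping point: Assumption~2 is stated with the max and sum running over all of $[L]$ rather than $[\alpha L]$; since $\alpha$ is fixed and the summands are nonnegative, the ratio over $[\alpha L]$ is controlled by — or directly of the same form as — the one over $[L]$, and one can also just restate the hypotheses with $\alpha L$ in place of $L$ without loss. I would add a remark to this effect.)

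The main obstacle, such as it is, is not analytic but a matter of correctly handling the $L$-dependence and the two normalizations simultaneously: the array is genuinely triangular (the $\mu_{l,L}$ change with $L$, and so does the number of terms $\alpha L$), so one cannot invoke the classical i.i.d.\ CLT and must be careful that Assumptions~2 and~3 are precisely the hypotheses Lindeberg's theorem needs for \emph{this} array. A secondary subtlety is the degenerate case $v_{\alpha,\infty}(x) = 0$: then Lindeberg's theorem does not apply (the limiting variance is zero), but in that case $\frac{1}{L}s_L^2(x) \to 0$ forces $\gamma_{\alpha,L}(x) \to 0$ in $L^2$, hence in distribution, to the constant $0$, which coincides with $\normalD(0,0)$; I would dispatch this case separately at the start. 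Also one should note $\alpha L$ need not be an integer — this is harmless and handled by reading $\alpha L$ as $\lfloor \alpha L\rfloor$ throughout, which does not affect any of the limits.
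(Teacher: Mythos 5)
Your proposal is correct and follows essentially the same route as the paper's proof: apply Lindeberg's CLT for triangular arrays, verify the Lindeberg condition by using the boundedness of $\eta_l$ together with Assumptions 1--2 to show the truncation event is eventually empty, and then rescale by $s_L(x)/\sqrt{L} \to \sqrt{\alpha\, v_{\alpha,\infty}(x)}$. Your additional remarks on the degenerate case $v_{\alpha,\infty}(x)=0$, the non-integer $\alpha L$, and the $[L]$ versus $[\alpha L]$ indexing are careful touches the paper glosses over, but they do not change the argument.
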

\begin{wrapfigure}{r}{0.4\textwidth}
  \centering
  \vspace{-2.99cm}
  \includegraphics[width=.4\textwidth]{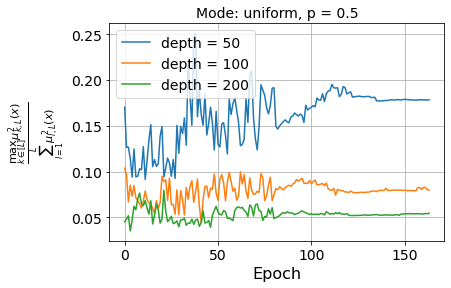}
  \caption{\small{ (\cref{thm:asymptotic_normality_of_noise}) Assumption 2 as a function the depth $L$ and epoch. }}
  \label{fig:gaussian_assumption}
  \vspace{-0.2cm}
\end{wrapfigure}
\cref{fig:gaussian_assumption} provides an empirical verification of the second condition of \cref{thm:asymptotic_normality_of_noise} across all training epochs. There is a clear downtrend as the depth increases; this trend is consistent throughout training, which supports the validity of the second condition in \cref{thm:asymptotic_normality_of_noise} at all training times. \cref{thm:asymptotic_normality_of_noise} shows that training a ResNet with \SD~involves implicitly adding the noise $\gamma_{\alpha,L}(x)$ to $\neuron$. This noise becomes asymptotically normally distributed\footnote{The limiting variance $v_{\alpha,\infty}(x)$ depends on the input $x$, suggesting that $\gamma_{\alpha,L}(.)$ might converge in distribution to a Gaussian process in the limit of large depth, under stronger assumptions. We leave this for future work.}, confirming that \SD~implicitly injects input-dependent Gaussian noise in this limit. \cite{camuto2020noiseinjection} studied GNI in the context of input-independent noise and concluded on the benefit of such methods on the overall performance of the trained network. We empirically confirm the results of \cref{thm:asymptotic_normality_of_noise} in \cref{sec:experiments} using different statistical normality tests.

Similarly, we study the \emph{implicit} regularization effect of \SD~induced on the gradient in Appendix \ref{app:implicit_regularization}, and show that under some assumptions, \SD~acts implicitly on the gradient by adding Gaussian noise in the large depth limit.



\section{The Budget Hypothesis}\label{sec:regimes}
We have seen in \cref{Sec:regularization} that given a budget $\bar{L}$, the uniform mode is linked to maximal regularization with \SD~at initialization (\cref{thm:maximal_reg_init}). Intuitively, for fixed weights $\weights$, the magnitude of standard regularization methods such as $\|.\|_1$ or $\|.\|_2$ correlates with the number of parameters; the larger the model, the bigger the penalization term. Hence, in our case, we would require the regularization term to correlate (in magnitude) with the number of parameters, or equivalently, the number of trainable layers. Assuming $L\gg 1$, and given a fixed budget $\bar{L}$, the number of trainable layers at each training iteration is close to $\bar{L}$ (\cref{Lemma:bounds_L}). Hence, the magnitude of the regularization term should depend on how large/small the budget $\bar{L}$ is, as compared to $L$. 
\vspace{-0.3cm}
\paragraph{Small budget regime ($\bar{L}/L \ll 1$).} In this regime, the effective depth $\Ldelta$ of the subnetwork is small compared to $L$. As the ratio $\bar{L}/L$ gets smaller, the sampled subnetworks become shallower, suggesting that the regularization need not be maximal in this case, and therefore $\probs$ should not be uniform in accordance with \cref{thm:maximal_reg_init}. Another way to look at this is through the bias-variance trade-off principle. Indeed, as $\bar{L}/L \to 0$, the variance of the model decreases (and the bias increases), suggesting less regularization is needed. The increase in bias inevitably causes a deterioration of the performance; we call this the Budget-performance trade-off.
To derive a more sensible choice of $\probs$ for small budget regimes, we introduce a new \emph{Information Discrepancy} based algorithm (\cref{Sec:regularization}). We call this algorithm \emph{Sensitivity Mode} or briefly \emph{SenseMode}. This algorithm works in two steps:
\begin{enumerate}
    \item Compute the sensitivity ($\mathcal{S}$) of the loss w.r.t the layer at initialization using the approximation,
    $$
\mathcal{S}_l = \loss(\weights;\ones) - \loss(\weights;\ones_{l}) \approx \nabla_{\delta_l} \loss(\weights;\mask)_{|\mask=\ones}.
$$
$\mathcal{S}_l$ is a measure of the sensitivity of the loss to keeping/removing the $l^{th}$ layer.
    \item Use a mapping $\varphi$ to map $\mathcal{S}$ to the mode,
    $
    \probs = \varphi(\mathcal{S})
    $, where $\varphi$ is a linear mapping from the range of $S$ to $[p_{min}, 1]$ and $p_{min}$ is the minimum survival rate (fixed by the user).
\end{enumerate}
\vspace{-0.3cm}
\paragraph{Large budget regime ($\bar{L}/L \sim 1$).} In this regime, the effective depth $\Ldelta$ of the subnetworks is close to $L$, and thus, we are in the overparameterized regime where maximal regularization could boost the performance of the model by avoiding over-fitting. Thus, we anticipate the uniform mode to perform better than other alternatives in this case. We are now ready to formally state our hypothesis,
\vspace{-0.3cm}
\paragraph{Budget hypothesis.} \emph{Assuming $L \gg1$, the uniform mode outperforms SenseMode in the large budget regime, while SenseMode outperforms the uniform mode in the small budget regime.}

We empirically validate the Budget hypothesis and the Budget-performance trade-off in \cref{sec:experiments}.

\section{Experiments}\label{sec:experiments}
\vspace{-0.2cm}
The objective of this section is two-fold: we empirically verify the theoretical analysis developed in sections \ref{Sec:large_width} and \ref{Sec:regularization} with a Vanilla ResNet model on a toy regression task; we also empirically validate the Budget Hypothesis on the benchmark datasets CIFAR-10 and CIFAR-100 \citep{krizhevsky2009learning}. Notebooks and code to reproduce all experiments, plots and tables presented are available in the supplementary material. We perform comparisons at constant training budgets.
\vspace{-0.2cm}
\paragraph{Implementation details:} Vanilla Stable ResNet is composed of identical residual blocks each formed of a Linear layer followed by ReLU. Stable ResNet110 follows \citep{he, huang2016stochasticdepth}; it comprises three groups of residual blocks; each block consists of a sequence Convolution-BatchNorm-ReLU-Convolution-BatchNorm. We build on an open-source implementation of standard ResNet\footnote{https://github.com/felixgwu/img\_classification\_pk\_pytorch}. We scale the blocks using a factor $1/\sqrt{L}$ as described in \cref{Sec:large_width}. 
The toy regression task consists of estimating the function 
$ f_\beta : x \mapsto \sin (\beta^T x),$
where the inputs $x$ and parameter $\beta$ are in $\mathbb{R}^{256}$, sampled from a standard Gaussian.   CIFAR-10, CIFAR-100 contain 32-by-32 color images, representing respectively 10 and 100 classes of natural scene objects. We present here our main conclusions. Further implementation details and other insightful results are in the Appendix \ref{app:further_experimental_results}.
\vspace{-0.2cm}
\paragraph{Gaussian Noise Injection:} We proceed by empirically verifying the Gaussian behavior of the neurons as described in  \cref{thm:asymptotic_normality_of_gradient_noise}. For each input $x$, we sample 200 masks and the corresponding $y(x; \boldsymbol{\delta})$. We then compute the p-value $pv_x$ of the Shapiro-Wilk test of normality \citep{shapiro1965analysis}. In \cref{fig:gaussian_output_summary} we represent the distribution of the p-values $\{pv_x\ |\ x \in \mathcal{X} \}$. We can see that the Gaussian behavior holds throughout training (left).  On the right part of the Figure, we can see that the Normal behavior becomes accurate after approximately 20 layers. In the Appendix we report further experiments with different modes, survival rates, and a different test of normality to verify both \cref{thm:asymptotic_normality_of_noise} and the critical assumption 2.

\begin{figure}	
    \vspace{-1em}
	\centering
	\begin{subfigure}[t]{.35\textwidth}
		\centering
		\includegraphics[width=\textwidth]{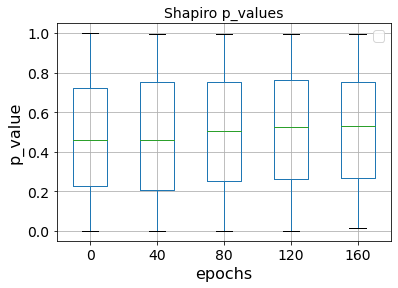}
	\end{subfigure}
	\quad
	\begin{subfigure}[t]{.35\textwidth}
		\centering
		\includegraphics[width=\textwidth]{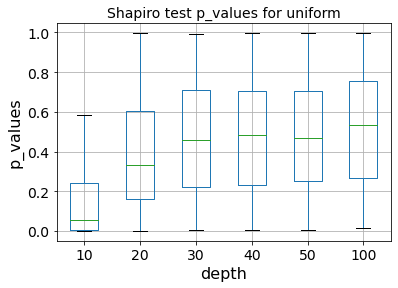}
	\end{subfigure}
	\caption{\small{Empirical verification of Theorem 2 on Vanilla ResNet100 with width 128 with average survival probability $\bar L/ L = 0.7$ and uniform mode. Distribution of the p-values for Shapiro's normality test as a function of the training epoch (left) and depth of the network (right). }}
	\label{fig:gaussian_output_summary}
	\vspace{-1em}
\end{figure}
\vspace{-0.2cm}
\paragraph{Empirical verification of the Budget Hypothesis:} We compare the three modes: Uniform, Linear, and SenseMode on two benchmark datasets using a grid survival proportions. The values and standard deviations reported are obtained using four runs. For SenseMode, we use the simple rule $p_l \propto | \mathcal{S}_l |$, where $\mathcal{S}_l$ is the sensitivity (see section \ref{sec:regimes}).
We report in Table \ref{tab:ResNet110} the results for Stable ResNet110. Results with Stable ResNet56 are reported in Appendix \ref{app:further_experimental_results}.
\begin{wraptable}{r}{5.5cm}
\vspace{-1.7em}
    \caption{\small{Comparison of the modes of selection of the survival probabilities with fixed budget with Stable ResNet110.}}
    \begin{subtable}{.4\textwidth}
  \label{growth_05}
  \centering
  \resizebox{5.5cm}{!}{%
  \begin{tabular}{cccc}
    \toprule
     $\bar L / L$ & Uniform & SenseMode & Linear \\
    \midrule
0.1 & 17.2  $\pm $ 0.3  & \textbf{15.4} $\pm $ 0.4 & $-$\\
0.2 & 10.3  $\pm $ 0.4 & \textbf{9.3} $\pm $ 0.5 & $-$\\
0.3 & 7.7  $\pm $ 0.2 & \textbf{7.0} $\pm $ 0.3 & $-$\\
0.4 & \textbf{7.4} $\pm $ 0.3 & \textbf{7.3} $\pm $ 0.4 & $-$\\
0.5 & \textbf{6.8} $\pm 0.1$ & 7.3 $\pm $ 0.2 & 9.1 $\pm $ 0.1 \\
0.6 & \textbf{6.3} $\pm 0.2$ & 6.9 $\pm $ 0.1  & 7.5 $\pm $ 0.2  \\
0.7 & \textbf{5.9} $\pm $ 0.1 & 7.3 $\pm $ 0.3  & 6.4 $\pm $  0.2  \\
0.8 & \textbf{5.7} $\pm$ 0.1 & 6.6 $\pm $ 0.2  & 6.1 $\pm $ 0.2 \\
0.9 & \textbf{5.7} $\pm$ 0.1 & 6.2 $\pm $ 0.2  & 6.0 $\pm $ 0.2 \\
\midrule
1 & \multicolumn{3}{c}{$6.37 \pm 0.12$} \\
    \bottomrule
  \end{tabular}
  }
  \caption{CIFAR10}
    \end{subtable}\\
    \begin{subtable}{.4\textwidth}
  \label{growth_05}
  \centering
  \resizebox{5.5cm}{!}{%
  \begin{tabular}{cccc}
    \toprule
     $\bar L / L$ & Uniform & SenseMode & Linear \\
    \midrule
0.1 & 55.2 $\pm$ 0.4  & \textbf{51.3} $\pm$ 0.6 & $-$\\
0.2 & 38.3 $\pm$ 0.3 & \textbf{36.4} $\pm$ 0.4 & $-$\\
0.3 & 31.4 $\pm$ 0.2 & \textbf{30.1} $\pm$ 0.5 & $-$\\
0.4 & 30.9 $\pm$ 0.2 & \textbf{28.5} $\pm$ 0.4 & $-$\\
0.5 & \textbf{28.4} $\pm$ 0.3 & 29.5 $\pm$ 0.5 & 36.5 $\pm$ 0.4 \\
0.6 & \textbf{26.5} $\pm$ 0.4 & 29.9 $\pm$ 0.6  & 30.9 $\pm$ 0.4 \\
0.7 & \textbf{25.8} $\pm$ 0.1 & 29.5 $\pm$ 0.3  & 27.3 $\pm$ 0.3  \\
0.8 & \textbf{25.5} $\pm$ 0.1 & 30.0 $\pm$ 0.3  & 25.7 $\pm$ 0.2 \\
0.9 & \textbf{25.5} $\pm$ 0.3 & 28.3 $\pm$ 0.2  & \textbf{25.5} $\pm$ 0.2 \\
\midrule
1 & \multicolumn{3}{c}{ 26.5 $\pm$ 0.2} \\
    \bottomrule
  \end{tabular}
  }
    \caption{CIFAR100}
    \end{subtable}
    \label{tab:ResNet110}
\vspace{-4em}
\end{wraptable}
The empirical results are coherent with the Budget Hypothesis. When the training budget is large, i.e. $\bar{L}/L \geq 0.5$, the Uniform mode outperforms the others. We note nevertheless that when $\bar{L}/L \geq 0.9$, the Linear and Uniform models have similar performance. This seems reasonable as the Uniform and Linear probabilities become very close for such survival proportions. When the budget is low, i.e. $\bar{L}/L < 0.5$, the SenseMode outperforms the uniform one (the linear mode cannot be used with budgets $\bar{L} <L/2$ when $L\gg1$, since $\sum p_l / L>1/2 - 1/(2L) \sim 1/2$), thus confirming the Budget hypothesis. \cref{tab:ResNet110} also shows a clear Budget-performance trade-off.

\section{Related work}
\vspace{-0.3cm}
The regularization effect of Dropout in the context of linear models has been the topic of a stream of papers \citep{wager2013dropout, mianjy2019dropout, helmbold2015inductive, cavazza2017dropout}. This analysis has been recently extended to neural networks by \cite{wei2020dropout} where authors used a similar approach to ours to depict the explicit and implicit regularization effects of Dropout. To the best of our knowledge, our paper is the first to provide analytical results for the regularization effect of \SD, and study the large depth behaviour of \SD, showing that the latter mimics Gaussian Noise Injection \citep{camuto2020noiseinjection}. A further analysis of the implicit regularization effect of \SD~is provided in Appendix \ref{app:implicit_regularization}.

\section{Limitations and extensions}\label{sec:limitations}
\vspace{-0.3cm}
In this work, we provided an analytical study of the regularization effect of \SD~using a second order Taylor approximation of the loss function. Although the remaining higher order terms are usually dominated by the second order approximation (See the quality of the second order approximation in Appendix \ref{app:further_experimental_results}), they might also be responsible for other regularization effects. This is a common limitation in the literature on noise-based regularization in deep neural networks \citep{camuto2020noiseinjection, wei2020dropout}. Further research is needed to isolate the effect of higher order terms.\\
We also believe that SenseMode opens an exciting research direction knowing that the low budget regime has not been much explored yet in the literature. We believe that one can probably get even better results with more elaborate maps $\varphi$ such that $\boldsymbol{p} = \varphi(\mathcal{S})$.
Another interesting extension of an algorithmic nature is the dynamic use of \emph{SenseMode} throughout training. We are currently investigating this topic which we leave for future work.


\newpage





\newpage

\bibliography{sample.bib}

\newpage

\section*{Appendix}

\renewcommand\theequation{A\arabic{equation}}
\renewcommand{\thelemma}{A\arabic{lemma}}
\renewcommand{\theprop}{A\arabic{prop}}
\renewcommand{\thecorollary}{A\arabic{corollary}}
\renewcommand{\thesection}{A\arabic{section}}
\renewcommand{\thedefinition}{A\arabic{definition}}

\setcounter{equation}{0}
\setcounter{lemma}{0}
\setcounter{prop}{0}
\setcounter{corollary}{0}
\setcounter{definition}{0}
\setcounter{section}{-1}

\section{An overview of signal propagation in wide neural networks}\label{app:overview_signal_prop}
In section, we review some results and tools from the theory of signal of propagation in wide neural networks. This will prove valuable in the rest of the appendix.
\subsection{Neural Network Gaussian Process (NNGP)}
\paragraph{Standard ResNet without \SD.}
Consider a standard ResNet architecture with $L$ layers. The forward propagation of some input $x \in \reals^d$ is given by 
\begin{equation}
\begin{aligned}
y_0(x) &= \Psi_0(x, W_0) \\
y_l(x) &= y_{l-1}(x) +  \Psi_l(y_{l-1}(x), W_l), \quad 1\leq l \leq L,\\
y_{out}(x) &= \Psi_{out}(y_{L}(x), W_{out}),
\end{aligned}
\end{equation}
where $W_l$ are the weights in the $l^{th}$ layer, $\Psi$ is a mapping that defines the nature of the layer, and $y_l$ are the pre-activations. we consider constant width ResNet and we further denote by $N$ the width, i.e. for all $l\in[L-1]$, $y_l \in \reals^{N}$. The output function of the network is given by $s(y_{out})$ where $s$ is some convenient mapping for the learning task, e.g. the Softmax mapping for classification tasks. We denote by $o$ the dimension of the network output, i.e. $s(y_{out}) \in \reals^{o}$ which is also the dimension of $y_{out}$. 
For our theoretical analysis, we consider residual blocks composed of a Fully Connected linear layer
$$
\Psi_l(x, W) = W \phi (x).
$$
where $\phi(x)$ is the activation function. The weights are initialized with He init \cite{he_init}, e.g. for ReLU, $W^l_{ij} \sim \mathcal{N}(0,2/N)$.

The neurons $\{y_0^i(x)\}_{i\in[1:N]}$ are iid normally distributed random variables since the weights connecting them to the inputs are iid normally distributed. Using the Central Limit Theorem, as $N_{0} \rightarrow \infty$, $y^i_{1}(x)$ becomes a Gaussian variable for any input $x$ and index $i \in [1:N]$. Additionally, the variables $\{y^i_1(x)\}_{i \in [1:N]}$ are iid. Thus, the processes $y^i_{1} (.) $ can be seen as independent (across $i$) centred Gaussian processes with covariance kernel $Q_1$. This is an idealized version of the true process corresponding to letting width $N_{0}\to \infty$. Doing this recursively over $l$ leads to similar results for $y_l^i(.)$ where $l \in [1:L]$, and we write accordingly $y_l^i \stackrel{ind}{\sim} \mathcal{GP}(0, Q_{l})$. The approximation of $y_l^i(.)$ with a Gaussian process was first proposed by \citep{neal} for single layer FeedForward neural networks and was extended recently to  multiple feedforward layers by \citep{lee_wide_nn_ntk} and \citep{matthews}. More recently, excellent work by \citep{yang_tensor3_2020} introduced a unifying framework named Tensor Programs, confirming the large-width Gaussian Process behaviour for nearly all neural network architectures. 

For any input $x \in \mathbb R^d$,  we have  $\mathbb E[y^i_l(x)] = 0$, so that the covariance kernel is given by $Q_l(x,x') = \mathbb{E}[y_l^1(x)y_l^1(x')]$. It is possible to evaluate the covariance kernels layer by layer, recursively. More precisely, assume that $y_{l-1}^i$ is a Gaussian process for all $i$. Let $x,x' \in \mathbb{R}^d$. We have that
\begin{equation*}
\begin{aligned}
Q_l(x,x') &= \mathbb{E}[y_l^1(x)y_l^1(x')]\\
&= \mathbb{E}[y_{l-1}^1(x)y_{l-1}^1(x')] + \sum_{j=1}^{N_{l-1}} \mathbb{E}[(W_l^{1j})^2 \phi(y^j_{l-1}(x))\phi(y^j_{l-1}(x'))] \\
&+ \mathbb{E}\left[\sum_{j=1}^{N_{l-1}} W_l^{1j} (y_{l-1}^1(x) \phi(y_{l-1}^1(x')) + y_{l-1}^1(x') \phi(y_{l-1}^1(x)))\right].
\end{aligned}
\end{equation*}
Some terms vanish because $\mathbb{E}[W_l^{1j}] = 0$. Let $Z_j = \sqrt{\frac{N}{2} W_l^{1j}}$. The second term can be written as 
$$
\mathbb{E}\left[\frac{2}{N} \sum_j (Z_j)^2 \phi(y^j_{l-1}(x))\phi(y^j_{l-1}(x'))\right] \rightarrow 2 \,\mathbb{E}[\phi(y^1_{l-1}(x))\phi(y^1_{l-1}(x'))]\,,
$$
where we have used the Central Limit Theorem. Therefore, the kernel $Q_l$ satisfies for all $x,x'\in \mathbb R^d$
\begin{align*}
Q_l(x,x') &= Q_{l-1}(x,x') + \mathcal{F}_{l-1}(x,x')\,,
\end{align*}
where $\mathcal{F}_{l-1}(x,x') = 2 \, \mathbb{E}[\phi(y_{l-1}^1(x))\phi(y_{l-1}^1(x'))]$.

For the ReLU activation function $\phi:x\mapsto\max(0,x)$, the recurrence can be written more explicitly as in \citep{hayou}.
Let $C_l$ be the correlation kernel, defined as 
\begin{align}\label{defc}
C_l(x,x') = \tfrac{Q_l(x,x')}{\sqrt{Q_l(x,x)Q_l(x',x')}}
\end{align}
and let $f:[-1,1]\to\mathbb{R}$ be given by 
\begin{align}\label{eq:correlation_function}
f:\gamma\mapsto \tfrac{1}{\pi}(\sqrt{1-\gamma^2}+\gamma\arcsin \gamma) + \frac{1}{2} \gamma\,.
\end{align}
The recurrence relation reads
\begin{align}\label{eq:kernel_recursion_without_SD}
\begin{split}
&Q_{l} = Q_{l-1} +\tfrac{f(C_{l-1})}{C_{l-1}}Q_{l-1}\,,\\
&Q_0(x,x') =  2 \,\tfrac{x\cdot x'}{d}\,.
\end{split}
\end{align}

\paragraph{Standard ResNet with \SD.} The introduction of the binary mask $\mask$ in front of the residual blocks slightly changes the recursive expression of the kernel $Q_l$. It ii easy to see that with \SD, the $Q_l$ follows
\begin{align}\label{eq:kernel_recursion_SD}
\begin{split}
&Q_{l} = Q_{l-1} + p_l \tfrac{f(C_{l-1})}{C_{l-1}}Q_{l-1}\,,\\
&Q_0(x,x') =  2 \,\tfrac{x\cdot x'}{d}\,.
\end{split}
\end{align}
where $f$ is given by \cref{eq:correlation_function}.

We obtain similar formulas for Stable ResNet with and without \SD. 

\paragraph{Stable ResNet without \SD.}
\begin{align}\label{eq:kernel_recursion_SD}
\begin{split}
&Q_{l} = Q_{l-1} + \frac{1}{L} \tfrac{f(C_{l-1})}{C_{l-1}}Q_{l-1}\,,\\
&Q_0(x,x') =  2 \,\tfrac{x\cdot x'}{d}\,.
\end{split}
\end{align}

\paragraph{Stable ResNet with \SD.}
\begin{align}\label{eq:kernel_recursion_SD}
\begin{split}
&Q_{l} = Q_{l-1} + \frac{p_l}{L} \tfrac{f(C_{l-1})}{C_{l-1}}Q_{l-1}\,,\\
&Q_0(x,x') =  2 \,\tfrac{x\cdot x'}{d}\,.
\end{split}
\end{align}

\subsection{Diagonal elements of the kernel $Q_l$}
\begin{lemma}[Diagonal elements of the covariance]\label{lemma:diagonal_elements}
Consider a ResNet of the form \cref{eq:standard_resnet} (standard ResNet) or \cref{eq:stable_resnet}(Stable ResNet), and let $x \in \mathbb{R}^d$. We have that for all $l \in [1:L]$,

\begin{itemize}
    \item Standard ResNet without \SD: $
Q_{l}(x,x) =  2^l Q_0(x,x) \,.
$
\item Standard ResNet with \SD: $
Q_{l}(x,x) = \prod_{k=1}^{l} (1 + p_k) Q_0(x,x) \,.
$
\item Stable ResNet without \SD: $
Q_{l}(x,x) = (1 + \frac{1}{L})^l Q_0(x,x) \,.
$
\item Stable ResNet with \SD: $
Q_{l}(x,x) = \prod_{k=1}^{l} (1 + \frac{p_k}{L}) Q_0(x,x) \,.
$
\end{itemize}

\end{lemma}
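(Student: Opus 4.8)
The plan is to observe that on the diagonal $x'=x$ the correlation kernel degenerates to $1$, which collapses each of the four recursions for $Q_l$ established above to a scalar linear recursion in $l$ that can be unrolled directly. All four cases have the common shape $Q_l = Q_{l-1} + a_l \tfrac{f(C_{l-1})}{C_{l-1}} Q_{l-1}$ with $a_l \in \{1,\, p_l,\, 1/L,\, p_l/L\}$ respectively, so it suffices to handle this generic recursion once.

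First I would dispose of the degenerate input $x=0$: then $Q_0(x,x) = 2\|x\|^2/d = 0$, the recursion forces $Q_l(x,x)=0$ for all $l$, and every claimed identity holds with both sides equal to zero. So assume $x\neq 0$, hence $Q_0(x,x)>0$. I would then run a single induction on $l$ establishing simultaneously that $Q_l(x,x)>0$ and $C_l(x,x)=1$. The base case is immediate from $Q_0(x,x)>0$; for the inductive step, if $Q_{l-1}(x,x)>0$ then $C_{l-1}(x,x) = Q_{l-1}(x,x)/\sqrt{Q_{l-1}(x,x)Q_{l-1}(x,x)} = 1$ by \cref{defc}, so the recursion reads $Q_l(x,x) = (1+a_l)Q_{l-1}(x,x)$, which is again strictly positive since $a_l\ge 0$, and $C_l(x,x)=1$ then follows the same way. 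The one computational ingredient used here is $f(1)=1$: from \cref{eq:correlation_function}, $f(1) = \tfrac{1}{\pi}\big(\sqrt{1-1} + 1\cdot\arcsin 1\big) + \tfrac12\cdot 1 = \tfrac{1}{\pi}\cdot\tfrac{\pi}{2} + \tfrac12 = 1$, so that $\tfrac{f(C_{l-1}(x,x))}{C_{l-1}(x,x)} = f(1) = 1$ on the diagonal, which is exactly what makes the step above go through.

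Finally, unrolling $Q_l(x,x) = (1+a_l)Q_{l-1}(x,x)$ from $l$ down to $0$ gives $Q_l(x,x) = \big(\prod_{k=1}^{l}(1+a_k)\big)\,Q_0(x,x)$, and substituting $a_k=1$, $a_k=p_k$, $a_k=1/L$, $a_k=p_k/L$ yields the four stated formulas, using $\prod_{k=1}^{l}2 = 2^l$ and $\prod_{k=1}^{l}(1+1/L) = (1+1/L)^l$ in the two constant-$a_k$ cases. There is no genuine obstacle in this argument; the only point that deserves to be stated carefully is the well-definedness of $C_l(x,x)$, i.e. that $Q_l(x,x)$ never vanishes when $x\neq 0$, which is absorbed into the same one-line induction.
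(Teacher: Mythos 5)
Your proposal is correct and follows essentially the same route as the paper's proof: evaluate the kernel recursion on the diagonal, use $C_{l-1}(x,x)=1$ and $f(1)=1$ to reduce it to $Q_l(x,x)=(1+a_l)Q_{l-1}(x,x)$, and telescope. The only difference is that you explicitly handle the degenerate case $x=0$ and the well-definedness of $C_l(x,x)$, which the paper leaves implicit.
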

\begin{proof}
Let us prove the result for Standard ResNet with \SD. The proof is similar for the other cases. Let $x \in \reals^d$.
We know that 
$$
Q_{l}(x,x) = Q_{l-1}(x,x) +  p_l f(1) Q_{l-1}(x,x)\,,
$$
where $f$ is given by \cref{eq:correlation_function}. It is straightforward that $f(1) = 1$. This yields
$$
Q_{l}(x,x) = (1 +  p_l) Q_{l-1}(x,x)\,.
$$
we conclude by telescopic product.
\end{proof}

\subsection{\cref{assumption:gradient_independence} and gradient backpropagation}\label{app:discussion_assumption}
For gradient back-propagation, an essential assumption in the literature on signal propagation analysis in deep neural networks is that of the gradient independence which is similar in nature to the practice of feedback alignment \citep{timothy_feedback}. This assumption (\cref{assumption:gradient_independence}) allows for derivation of recursive formulas for gradient back-propagation, and it has been extensively used in literature and empirically verified; see references below. 

\paragraph{Gradient Covariance back-propagation.} \cref{assumption:gradient_independence} was used to derive analytical formulas for gradient covariance back-propagation in a stream of papers; to cite a few, \citep{hayou, samuel lee_gaussian_process, poole, xiaocnnmeanfield, yang2017meanfield}. It was validated empirically through extensive simulations that it is an excellent tool for FeedForward neural networks in \cite{samuel}, for ResNets in \cite{yang2017meanfield} and for CNN in \cite{xiaocnnmeanfield}. 

\paragraph{Neural Tangent Kernel (NTK).} \cref{assumption:gradient_independence} was implicitly used by \cite{jacot} to derive the recursive formula of the infinite width Neural Tangent Kernel (See \cite{jacot}, Appendix A.1). Authors have found that this assumption yields excellent empirical match with the exact NTK. It was also used later in \citep{arora2019exact,hayou_ntk} to derive the infinite depth NTK for different architectures. 

When used for the computation of gradient covariance and Neural Tangent Kernel, \citep{yang_tensor3_2020} proved that \cref{assumption:gradient_independence} yields the exact computation of the gradient covariance and the NTK in the limit of infinite width. We state the result for the gradient covariance formally.
\begin{lemma}[Corollary of Theorem D.1. in \citep{yang_tensor3_2020}]\label{lemma:gradient_independence}
Consider a ResNet of the form \eqref{eq:standard_resnet} or \eqref{eq:stable_resnet} with weights $W$. In the limit of infinite width, we can assume that $W^T$ used in back-propagation is independent from $W$ used for forward propagation, for the calculation of Gradient Covariance.
\end{lemma}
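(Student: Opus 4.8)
The plan is to recognize \cref{lemma:gradient_independence} as a direct specialization of the Tensor Programs master theorem (Theorem D.1 in \citep{yang_tensor3_2020}), so that the ``proof'' consists of casting the forward/backward computation in the right normal form and verifying the hypotheses. First I would write the joint forward and backward pass as a single Tensor Program with transposes (a \textsc{Netsor}${}^{\top}$ program): the forward vectors $y_0(x),\dots,y_L(x)$ are produced by $y_l = y_{l-1} + \tfrac{1}{\sqrt L} W_l\,\phi(y_{l-1})$ for the stable case in \eqref{eq:stable_resnet} (drop the $1/\sqrt L$ for \eqref{eq:standard_resnet}), the readout is $y_{out} = W_{out}\,\phi(y_L)$, and the backward vectors $dy_l := \nabla_{y_l}\loss$ are generated by $dy_L = \mathrm{diag}(\phi'(y_L))\,W_{out}^{\top}\,\nabla_{y_{out}}\ell$ and $dy_{l-1} = dy_l + \tfrac{1}{\sqrt L}\,\mathrm{diag}(\phi'(y_{l-1}))\,W_l^{\top} dy_l$. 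Every gradient-covariance quantity of interest --- e.g. $\tilde q_l(x,z)$ in \cref{prop:exploding_gradient} and the coefficients $\E_{\weights}[g_l(\weights)]$ used in \cref{thm:maximal_reg_init} --- is an empirical average $\tfrac1N\langle u,u'\rangle$ of coordinates of such program vectors, i.e. exactly the kind of moment the master theorem controls; it therefore converges almost surely to a deterministic limit given by the program's $Z$-recursion.

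The one genuine subtlety is that in a \textsc{Netsor}${}^{\top}$ program the rule for $W_l^{\top} dy_l$ is \emph{not} simply ``multiply by a fresh Gaussian'': because $dy_l$ depends functionally on the forward vector $W_l\,\phi(y_{l-1})$, the master theorem assigns $Z^{W_l^{\top} dy_l}$ a purely Gaussian part plus a correction term proportional to $\E\big[\partial (dy_l)_\bullet/\partial (W_l\phi(y_{l-1}))_\bullet\big]\cdot \phi(y_{l-1})$. The heart of the argument is that this correction vanishes. Since the readout weights $W_{out}$ are sampled independently of all the $W_l$ and appear exactly once, an induction down the backward pass shows that each $dy_l$ is ``$W_{out}$-odd'': its limiting coordinate law $Z^{dy_l}$ is a mean-zero Gaussian that is independent of every forward $Z$-variable. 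The correction term is then an expectation that factorizes into a function of the forward $Z$'s (coming from $\phi'$ and $\phi''$) times a quantity that is either $\E[Z^{dy_l}]$ or a covariance of a forward variable with $Z^{dy_l}$ --- all zero. Hence the limiting value of every moment is unchanged if each $W_l^{\top}$ in the backward pass is replaced by an independent copy $\widetilde W_l$ with i.i.d. $\normalD(0,2/N)$ entries, which is precisely the content of \cref{assumption:gradient_independence}.

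Finally I would assemble the pieces: with the corrections gone, the $Z$-recursion produced by the master theorem is term-for-term identical to the elementary recursion one obtains by formally treating $W_l^{\top}$ as independent of $W_l$ and repeatedly using $\E[\langle W_l^{\top} a, W_l^{\top} b\rangle] = \tfrac{2}{N}\langle a,b\rangle$ layer by layer, which is exactly how the recursive covariance formulas throughout \cref{Sec:large_width} and \cref{Sec:regularization} are derived. Two points need routine care: the activation $\phi=\mathrm{ReLU}$ is not differentiable at $0$, so one works in the polynomially-bounded, almost-everywhere-differentiable class permitted by the master theorem (approximate ReLU by smooth functions and pass to the limit, using that $\{y_l^i = 0\}$ has probability zero); and one must check the scaling, namely that He initialization together with the $1/\sqrt L$ factor in \eqref{eq:stable_resnet} keeps all program vectors $\Theta(1)$ in norm so that Theorem D.1 applies without rescaling. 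I expect the main obstacle to be exactly the vanishing-correction step --- making the ``$W_{out}$-odd'' induction and the resulting factorization of the correction expectation fully rigorous --- since everything else is bookkeeping once the computation is written as a Tensor Program and Theorem D.1 of \citep{yang_tensor3_2020} is invoked.
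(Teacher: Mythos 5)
The paper offers no proof of this lemma at all: it is stated verbatim as a corollary of Theorem~D.1 of \citep{yang_tensor3_2020}, with the entire burden carried by that citation. Your sketch is a correct reconstruction of the argument inside that reference --- writing the joint forward/backward pass as a Tensor Program with transposes, and showing that the master theorem's correction term to $Z^{W_l^{\top} dy_l}$ vanishes because the backward vectors are odd in the independently sampled, zero-mean readout weights $W_{out}$ --- so it is consistent with, and strictly more detailed than, what the paper provides.
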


\begin{lemma}[Gradient Second moment]\label{lemma:gradient_backprop}
Consider a ResNet of type \cref{eq:stable_resnet} without \SD. Let $(x, t) \in \data$ be a sample from the dataset, and define the second of the gradient with $\tilde{q}_l(x,t) = \E_{\weights}\left[ \frac{\partial \loss(x, \weights)}{\partial y_l}\right]$. Then, in the limit of infinite width, we have that 
$$
\tilde{q}^l(x,t) = \left(1 +\frac{1}{L}\right)\, \tilde{q}^{l+1}(x,t)\,.
$$
As a result, for all $l \in [1:L]$, we have that 
$$
\tilde{q}^l(x,t) = \left(1 +\frac{1}{L}\right)^{L-l}\, \tilde{q}^{L}(x,t)\,.
$$
\end{lemma}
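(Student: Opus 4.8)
The plan is to derive the one-step backward recursion for $\nabla_{y_l}\loss$, take squared norms and expectations over $\weights$, and invoke \cref{assumption:gradient_independence} — in the precise form of \cref{lemma:gradient_independence} — to decouple the weights appearing in the backward pass from the forward-propagated quantities (so that $\tilde q_l$ should be read as the expected squared gradient norm $\E_{\weights}\lVert\nabla_{y_l}\loss\rVert^2$, up to a $1/N$ factor if one takes the per-neuron version). Writing the Stable ResNet update without \SD~as $y_l = y_{l-1} + \frac{1}{\sqrt{L}} W_l \phi(y_{l-1})$, the chain rule gives
$$
\nabla_{y_{l-1}}\loss \;=\; \nabla_{y_l}\loss \;+\; \tfrac{1}{\sqrt{L}}\, D_{l-1}\, W_l^{T}\, \nabla_{y_l}\loss,
$$
where $D_{l-1} = \mathrm{diag}(\phi'(y_{l-1}))$. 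First I would square both sides in Euclidean norm and take the expectation over $\weights$, treating the copy of $W_l$ in the backward pass as independent of the forward weights — hence of both $\nabla_{y_l}\loss$ and $D_{l-1}$ — which is exactly what \cref{lemma:gradient_independence} licenses for the computation of gradient second moments in the infinite-width limit.

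Under this independence the cross term $\frac{2}{\sqrt{L}}\,\E[(\nabla_{y_l}\loss)^{T} D_{l-1} W_l^{T}\nabla_{y_l}\loss]$ vanishes, since every entry of the independent copy of $W_l$ is centred; this leaves $\tilde q_{l-1} = \tilde q_l + \frac{1}{L}\,\E\lVert D_{l-1} W_l^{T}\nabla_{y_l}\loss\rVert^2$. For the quadratic term I would condition on everything except the independent copy $W_l$ and use that its entries are i.i.d.\ $\normalD(0,2/N)$ to get $\E_{W_l}\lVert D_{l-1} W_l^{T} g\rVert^2 = \frac{2}{N}\,\lVert g\rVert^2 \sum_{i} \phi'(y_{l-1}^i)^2$ with $g = \nabla_{y_l}\loss$. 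In the infinite-width limit the preactivations $\{y_{l-1}^i\}_i$ are i.i.d.\ centred Gaussians (NNGP), so $\frac{1}{N}\sum_i \phi'(y_{l-1}^i)^2 = \frac{1}{N}\sum_i \mathbf{1}_{\{y_{l-1}^i>0\}} \to \tfrac12$, and by the same decoupling this empirical gate fraction concentrates independently of $\lVert g\rVert^2$; hence $\frac{1}{L}\,\E\lVert D_{l-1}W_l^{T} g\rVert^2 = \frac{1}{L}\,\frac{2}{N}\cdot\frac{N}{2}\,\E\lVert g\rVert^2 = \frac{1}{L}\,\tilde q_l$. Combining the two contributions yields $\tilde q_{l-1} = (1+\tfrac1L)\,\tilde q_l$, i.e.\ $\tilde q_l = (1+\tfrac1L)\,\tilde q_{l+1}$, and the closed form $\tilde q_l = (1+\tfrac1L)^{L-l}\tilde q_L$ follows by a telescoping product from index $l$ up to $L$.

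The main obstacle is making the infinite-width decoupling rigorous: both the exact vanishing of the cross term and the factorization $\E[\lVert g\rVert^2\cdot \frac1N\sum_i\phi'(y_{l-1}^i)^2]\to \tfrac12\,\E\lVert g\rVert^2$ rely on $\lVert\nabla_{y_l}\loss\rVert^2$ and the forward gate statistics becoming asymptotically independent, which is precisely the content of \cref{lemma:gradient_independence} (the corollary of Theorem D.1 in \citep{yang_tensor3_2020}); once that is invoked, the remaining steps are routine second-moment bookkeeping. A useful sanity check is that the He scaling $2/N$ makes the ReLU gate contribution equal to $\frac{2}{N}\cdot\frac{N}{2}=1$, so the per-layer multiplier is the clean $(1+1/L)$ — mirroring the diagonal forward recursion of \cref{lemma:diagonal_elements} where $f(1)=1$ plays the analogous role; for a general activation the factor $2\,\E[\phi'(\cdot)^2]$ would replace it, but it is $1$ for ReLU.
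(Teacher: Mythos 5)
Your proposal is correct and follows essentially the same route as the paper's proof: write the one-step backward recursion $\nabla_{y_{l}}\loss = \nabla_{y_{l+1}}\loss + \frac{1}{\sqrt{L}} D_{l}W_{l+1}^{T}\nabla_{y_{l+1}}\loss$, invoke \cref{lemma:gradient_independence} to treat the backward copy of $W_{l+1}$ as an independent centred Gaussian (killing the cross term and factorizing the quadratic term), use $2\,\E[\phi'(y_l^1)^2]=2\cdot\tfrac12=1$ for ReLU under the NNGP limit, and telescope. You also correctly read the (typo-laden) statement as concerning the expected squared gradient norm, which is exactly what the paper's argument computes.
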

\begin{proof}
It is straighforward that 
$$
\frac{\partial \loss(x, \weights)}{\partial y_l^i} = \frac{\partial \loss(x, \weights)}{\partial y_{l+1}^i} + \frac{1}{\sqrt{L}} \sum_{j} \frac{\partial \loss(x, \weights)}{\partial y_{l+1}^j} W_{l+1}^{ji} \phi'(y_l^i(x))\,.
$$
Using lemma \ref{lemma:gradient_independence} and the Central Limit Theorem, we obtain
$$
\tilde{q}^l(x,t) = \tilde{q}^{l+1}(x,t) + \frac{2}{L} \tilde{q}^{l+1}(x,t) \mathbb{E}[\phi'(y_l^1(x))^2]\,.
$$
We conclude by observing that $\mathbb{E}[\phi'(y^l_i(x))^2] = \mathbb{P}(\mathcal{N}(0,1) > 0) = \frac{1}{2}$.
\end{proof}

\section{Proofs}\label{app:proofs}
\subsection{Proof of \cref{Lemma:bounds_L}}

\begin{manuallemma}{\ref{Lemma:bounds_L}}[Concentration of  $L_{\bm{\delta}}$]
For any $\beta \in (0,1)$, we have that with probability at least $1-\beta$,
\begin{equation}
|L_{\bm{\delta}} - L_{\bm{p}}| \leq v_{\bm{p}} \,  u^{-1}\left(\frac{\log(2/\beta)}{v_{\bm{p}}}\right)
\end{equation}
where $L_{\bm{p}} = \mathbb{E}[L_{\bm{\delta}}] = \sum_{l=1}^L p_l$, $v_{\bm{p}} = \textup{Var}[L_{\bm{\delta}}] = \sum_{l=1}^L p_l(1- p_l)$, and $u(t) = (1+t) \log(1+t) - t$.

Moreover, for a given average depth $L_{\bm{p}}=\bar{L}$, the upperbound in \cref{eq:bennett} is maximal for the uniform choice of survival probabilities $\bm{p} = \left(\frac{\bar{L}}{L}, ..., \frac{\bar{L}}{L}\right)$.
\end{manuallemma}

\begin{proof}
The concentration inequality is a simple application of Bennett's inequality: Let $X_1, ..., X_n$ be a sequence of independent random variables with finite variance and zero mean. Assume that there exists $a \in \mathbb{R}^+$ such that $X_i \leq a$ almost surely for all $i$. Define $S_n = \sum_{i=1}^n X_i$ and $\sigma_n^2 = \sum_{i=1}^n \E[X_i^2]$. Then, for any $t>0$, we have that 
$$
\mathbb{P}(|S_n| > t) \leq 2 \exp\left(-\frac{\sigma_n^2}{a^2} u\left(\frac{a t}{\sigma_n^2}\right)\right).
$$

Now let us prove the second result. Fix some $\bar{L} \in (0, L)$. We start by proving that the function $\zeta(z) = z \, u^{-1}\left( \frac{\alpha}{z}\right)$ is increasing for any fixed $\alpha>0$. Observe that $u'(t) = \log(1+t)$, so that 
$$(u^{-1})'(z) = \frac{1}{u'(u^{-1}(z))} = \frac{1 + u^{-1}(z)}{z + u^{-1}(z)}.$$

This yields
$$
\zeta'(z) = \frac{z u^{-1}\left( \frac{\alpha}{z}\right)^2 - \alpha}{\alpha + x u^{-1}\left( \frac{\alpha}{z}\right)}
$$
For $z>0$, the numerator is positive if and only if $\frac{\alpha}{z} > u\left(\sqrt{\frac{\alpha}{z}} \right)$, which is always true using the inequality $\log(1+t)< t$ for all $t> 0$.\\

Now let $\alpha = \log(2/\beta)$. Without restrictions on $\Lp$, it is straightforward that $v_{\bm{p}}$ is maximized by $\bm{p}' = (1/2, ..., 1/2)$. With the restriction $\Lp = \bar{L}$, the minimizer is the orthogonal projection of $\bm{p}'$ onto the convex set $\{ \bm{p}: \Lp = \bar{L}\}$. This projection inherits the symmetry of $\bm{p}'$, which concludes the proof.
\end{proof}

\subsection{Proof of \cref{prop:exploding_gradient}}
\begin{manualprop}{\ref{prop:exploding_gradient}}[Exploding gradient rate]
Let $\loss(x,z) = \ell(y_{out}(x; \bm{\delta}), z)$ for $(x, z) \in \reals^d \times \reals^o$, where $\ell(z,z')$ is some differentiable loss function. Let $\tilde{q}_l(x,z)= \E_{W,\bm{\delta}} \frac{\lVert \nabla_{y_l} \loss \rVert^2}{ \lVert \nabla_{y_L} \loss \rVert^2}$, where the numerator and denominator are respectively the norms of the gradients with respect to the inputs of the $l^{th}$ and $L^{th}$ layers . Then, in the infinite width limit, under \cref{assumption:gradient_independence}, for all $l \in [L]$ and $(x,z) \in \reals^d \times \reals^o$, we have 
\begin{itemize}
    \item With Stochastic Depth, $ \tilde{q}_l(x,z)  = \prod_{k=l+1}^L (1+p_k)$
    \item Without Stochastic Depth (i.e. $\mask=\ones$), $ \tilde{q}_l(x,z)  = 2^{L-l} $
\end{itemize}

\end{manualprop}

\begin{proof}
It is straightforward that with Stochastic Depth
$$
\frac{\partial \loss(x, \weights)}{\partial y_l^i} = \frac{\partial \loss(x, \weights)}{\partial y_{l+1}^i} + \delta_{l+1} \sum_{j} \frac{\partial \loss(x, \weights)}{\partial y_{l+1}^j} W_{l+1}^{ji} \phi'(y_l^i(x))\,.
$$
Denote for clarity for any neuron $i$ and layer $l$, $d_{i,l} = \frac{\partial \loss(x, \weights)}{\partial y_l^i}.$ The equation can be written
\begin{equation}\label{prop_exploding:main_eq}
    d_{i,l} = d_{i, l+1} + \delta_{l+1} \sum\limits_j d_{j, l+1} W^{ji}_{l+1} \phi'(y^i_l(x))
\end{equation}
We notice that for any $k \geq l+1$ and any $i$, $d_{i, k}$ on depends on $W_{l+1}$ through the forward pass, and that the terms $W^{ji}_{l+1}$ in equation \eqref{prop_exploding:main_eq} come from the backward pass. Therefore, Lemma  \ref{lemma:gradient_independence} entails
\begin{eqnarray*}
  \mathbb{E}_{W^{\text{backward}}_{l+1}} \frac{\lVert \nabla_{y_l} \loss \rVert^2}{ \lVert \nabla_{y_L} \loss \rVert^2} &=& \frac{\sum_i d_{i,l+1}^2 + \delta_{l+1}^2 \frac{2}{N} \sum_i \phi'(y_l^i(x))^2 \sum_j d_{j, l+1}^2}{\sum_j d_{j, L}^2} \\
  &=& \frac{\lVert \nabla_{y_{l+1}} \loss \rVert^2}{ \lVert \nabla_{y_L} \loss \rVert^2} (1 + \delta_{l+1} \frac{2}{N} \sum_i \phi'(y_l^i(x))^2),
\end{eqnarray*}
where $\frac{2}{N}$ is the variance of $W^{ji}_{l+1}$ ($N$ is the width of the network). Again using Lemma \ref{lemma:gradient_independence} and taking the expectation with respect to the remaining weights and mask concludes the proof.

\end{proof}

\subsection{Proof of \cref{lemma:maximal_regularization}}
\begin{manuallemma}{\ref{lemma:maximal_regularization}}[Maximal regularization]
Consider the empirical loss $\loss$ given by \cref{eq:empirical_loss} for some fixed weights $\weights$ (e.g. $\weights$ could be the weights at any training step of SGD).Then, for a fixed training budget $\bar{L}$, the regularization is maximal for 
$$p^*_l = \min \big( 1, \max (0, \frac{1}{2} - C g_l(\weights)^{-1} ) \big),$$
where $C$ is a normalizing constant, that has the same sign as $L-2 \bar{L}$. The global maximum is obtained for $p_l = 1/2$.
\end{manuallemma}

\begin{proof}
Let $a_l = g_l(\weights)$. Noticing that $p_l(1-p_l) = 1/4 - (p_l-1/2)^2$, it comes that
\begin{eqnarray*}
  \sum\limits_l p_l (1-p_l) a_l &=& \frac{\sum a_l}{4} - \sum_l (p_l-1/2)^2 a_l \\
  &=& \frac{\sum a_l}{4} - \left\Vert p \odot \sqrt{a} - \frac{\sqrt{a}}{2} \right\Vert_2^2,
\end{eqnarray*}
with the abuse of notations $\sqrt{a}= (\sqrt{a_1}, ..., \sqrt{a_L})$ and where $\odot$ stands for the element-wise product. We deduce from this expression that $p=\bm{\frac{1}{2}}$ is the global maximizer of the regularization term. With a fixed training budget, notice that the expression is maximal for $p^* \odot \sqrt{a}$ the orthogonal projection of $\frac{\sqrt{a}}{2}$ on the intersection of the affine hyper-plane $\mathcal H$ containing the $L$ points of the form $(0,..., L_m \sqrt{a_l}, ..., 0)$ and the hyper-cube $\mathcal C$ of the points with coordinates in $[0,1]$. Writing the KKT conditions yields for every $l$:
$$ p_l = 1/2 - \beta a_l^{-1} - \lambda_{0, l} \mathbb{1}_{p_l = 0} + \lambda_{1, l} \mathbb{1}_{p_l = 1}, $$
where $\lambda_{0,l}, \lambda_{1, l} \geq 0.$ Taking $p = p^*$, $\beta = C$, $\lambda_{0, l} = 1/2 + \beta a_l^{-1}$ and $\lambda_{1, l} = 1/2 - \beta a_l^{-1}.$ Since the program is quadratic, the KKT conditions are necessary and sufficient conditions, which concludes the proof. 

\end{proof}

\subsection{Proof of \cref{thm:maximal_reg_init}}
\begin{manualthm}{\ref{thm:maximal_reg_init}}[$p^*$ is uniform at initialization]
Assume $\phi=\textrm{ReLU}$ and $\weights$ are initialized with $\normalD(0, \frac{2}{N})$. Then, in the infinite width limit, under \cref{assumption:gradient_independence}, for all $l\in[1:L]$, we have 
$$\E_{\weights}[g_l(\weights)] = \E_{\weights}[g_1(\weights)].$$
As a result, given a budget $\bar{L}$, the average regularization term  $\frac{1}{2L}\sum_{l=1}^Lp_l(1-p_l)\E_{\weights}[g_l(\weights)]$ is maximal for the uniform mode $\probs^* = (\bar{L}/L, \dots, \bar{L}/L)$.
\end{manualthm}

\begin{proof}
Using the expression of $g_l$, we have that
$$
\E_{\weights}[g_l(\weights)] = \frac{2}{n} \sum_{i=1}^n \E_{\weights}[\|\zeta_l(x_i, \weights)\|_2^2].
$$
Thus, to conclude, it is sufficient to show that for some arbitrary $i \in [1:n]$, we have $\E_{\weights}[\|\zeta_l(x_i, \weights)\|_2^2] = \E_{\weights}[\|\zeta_1(x_i, \weights)\|_2^2]$ for all $l \in [1, L]$. \\
Fix $i \in [1:n]$ and $l\in[1:L]$. For the sake of simplicity, let $z_l = z_l(x;\ones)$ and $y_l=y_l(x;\ones)$. We have that
$$
\|\zeta_l(x,\weights)\|_2^2 = \sum_{j=1}^o \langle \nabla_{y_l} G_l^j(y_l), z_l \rangle^2
$$
Now let $j \in [1:o]$. We have that 
$$
\langle \nabla_{y_l} G_l^j(y_l), z_l \rangle^2 = \sum_{k,k'} \frac{\partial G_l^j}{\partial y_l^k} z_l^k \frac{\partial G_l^j}{\partial y_l^{k'}} z_l^{k'}
$$
Using \cref{assumption:gradient_independence}, in the limit $N \rightarrow \infty$, we obtain
$$
\E_{\weights}[\langle \nabla_{y_l} G_l^j(y_l), z_l \rangle^2] = \sum_{k,k'} \E_{\weights}\left[\frac{\partial G_l^j}{\partial y_l^k}  \frac{\partial G_l^j}{\partial y_l^{k'}}\right] \E_{\weights}[z_l^k z_l^{k'}]
$$
Since $\E_{\weights}[z_l^k z_l^{k'}] = 0$ for $k \neq k'$, we have that 
$$
\E_{\weights}[\langle \nabla_{y_l} G_l^j(y_l), z_l \rangle^2] = \sum_{k=1}^\infty \E_{\weights}\left[ \left(\frac{ \partial G_l^j}{\partial y_l^k}\right)^2\right] \E_{\weights}[(z_l^k)^2]
$$
Let us deal first with the term $\E_{\weights}[(z_l^k)^2]$.

Let $k \in \mathbb{N}$. Recall that for finite $N$, we have that $z_l^k = \sum_{m=1}^N W_l^{k,m} \phi(y_{l-1}^m)$ and $W_l^{k,m} \sim \normalD(0, \frac{2}{N})$. Thus, using the Central Limit Theorem in the limit $N \rightarrow \infty$, we obtain
$$
\E_{\weights}[(z_l^k)^2] = 2 \, \E_{Z\sim \normalD(0, 1)}[\phi(\sqrt{q_{l-1}} Z)^2] = Q_{l-1}(x,x),
$$
where $Q_{l-1}(x,x)$ is given by \cref{lemma:diagonal_elements}. We obtain $$\E_{\weights}[(z_l^k)^2] = \left(1 + \frac{1}{L}\right)^{l-1} Q_0(x,x).$$

The term $\bar{q}^j_l = \E_{\weights}\left[ \left(\frac{ \partial G_l^j}{\partial y_l^k}\right)^2\right]$ can be computed in a similar fashion to that of the proof of \cref{lemma:gradient_backprop}. Indeed, using the same techniques, we obtain 
$$
\bar{q}^{j,k}_l = \left( 1 + \frac{1}{L}\right) \bar{q}^{j,k}_{l+1}
$$
which yields
$$
\bar{q}^{j,k}_l = \left( 1 + \frac{1}{L}\right)^{L-l-1} \bar{q}^{j,k}_{L}
$$

Using the Central Limit Theorem again in the last layer, we obtain 
$$
\sum_{k=1}^\infty \E_{\weights}\left[ \left(\frac{ \partial G_l^j}{\partial y_l^k}\right)^2\right]  = \left( 1 + \frac{1}{L}\right)^{L-l}.
$$
This yields
$$
\E_{\weights}[\langle \nabla_{y_l} G_l^j(y_l), z_l \rangle^2] = \left( 1 + \frac{1}{L}\right)^{L-1} Q_0(x,x).
$$
and we conclude that 
$$
\E_{\weights}[\|\zeta_l(x_i, \weights)\|_2^2] = o\times \left( 1 + \frac{1}{L}\right)^{L-1} Q_0(x,x)
$$
The latter is independent of $l$, which concludes the proof.

\end{proof}

\subsection{Proof of \cref{thm:asymptotic_normality_of_noise}}
Consider an arbitrary neuron $y^i_{\alpha L}$ in the $(\alpha L)^{th}$ layer for some fixed $\alpha \in (0,1)$. $\neuron(x,\mask)$ can be approximated using a first order Taylor expansion around $\mask=\ones$. We obtain similarly,
\begin{equation}
\begin{split}
\neuron(x,\mask)
&\approx \bar{y}^{i}_{\alpha L}(x) +  \frac{1}{\sqrt{L}}\sum_{l=1}^{\alpha L} \eta_l \, \langle z_l, \nabla_{y_l} G_l^i(y_l(x;\ones)) \rangle 
\end{split}
\end{equation}


where $G_l^i$ is defined by $\neuron(x; \ones) = G_l^i(y_l(x;\ones))$, $\eta_l = \delta_l - p_l$, and  $\bar{y}^i_{\alpha L}(x) =\neuron(x,\ones) +  \frac{1}{\sqrt{L}}\sum_{l=1}^{\alpha L} (p_l - 1) \, \langle z_l, \nabla_{y_l} G_l^i(y_l(x;\ones)) \rangle \approx \neuron(x,\probs)$.

Let $\gamma_{\alpha,L}(x) = \frac{1}{\sqrt{L}}\sum_{l=1}^{\alpha L} \eta_l \, \langle z_l, \nabla_{y_l} G_l^i(y_l(x;\ones)) \rangle$. The term $\gamma_{\alpha,L}$ captures the randomness of the binary mask $\mask$, which up to a factor $\alpha$, resembles to the scaled mean in Central Limit Theorem(CLT) and can be written as
$$
\gamma_{\alpha,L}(x) = \sqrt{\alpha} \times \frac{1}{\sqrt{\alpha L}} \sum_{l=2}^{\alpha L} X_{l,L}(x)
$$
where $X_{l,L}(x) = \eta_l \,\langle z_l, \nabla_{y_l} G_l^i(y_l(x;\ones)) \rangle$. We use a Lindeberg's CLT to prove the following result.
\begin{manualthm}{\ref{thm:asymptotic_normality_of_gradient_noise}}
Let $x \in \reals^d$, $X_{l,L}(x) = \eta_l \, \mu_{l,L}(x)$ where $\mu_{l,L}(x)= \langle z_l, \nabla_{y_l}  G^i_l(y_l(x;\ones)\rangle$, and $\sigma_{l, L}^2(x) = \textrm{Var}_\delta[X_{l, L}(x)] = p_l(1-p_l) \mu_{l,L}(x)^2$ for $l \in [L]$. Assume that

\begin{enumerate}
    \item There exists $a \in (0,1/2)$ such that for all $L$, and $l \in [L]$, $p_l \in (a,1-a)$.
    \item $\lim_{L\to \infty}\frac{\max_{k \in [L]} \mu_{k,L}^2(x)}{\sum_{l=1}^L \mu_{l,L}^2(x)} = 0$.
    \item $ v_{\alpha, \infty}(x) := \lim_{L \to \infty} \frac{\sum_{l=1}^L \sigma_{l,L}^2(x)}{L}$ exists and is finite.
\end{enumerate}
Then, 
$$
\gamma_{\alpha,L}(x) \underset{L \to \infty}{\overset{D}{\longrightarrow}} \normalD(0, \alpha \, v_{\alpha,\infty}(x)).
$$
\end{manualthm}

Before proving \cref{thm:asymptotic_normality_of_noise}, we state Lindeberg's CLT for traingular arrays.

\begin{thm}[Lindeberg's Central Limit Theorem for Triangular arrays]
Let $(X_{n,1}, \dots, X_{n,n})_{n\geq1}$ be a triangular array of independent random variables, each with finite mean $\mu_{n,i}$ and finite variance $\sigma_{n,i}^2$. Define $s_n^2 = \sum_{i=1}^n \sigma_{n,i}^2$. Assume that for all $\epsilon > 0$, we have that
$$
\lim_{n \to \infty} \frac{1}{s_n^2} \sum_{i=1}^n \E[(X_{n,i} - \mu_{n,i})^2 1_{\{ |X_{n,i} - \mu_{n,i}| > \epsilon s_n\}}] = 0,
$$
Then, we have
$$
\frac{1}{s_n} \sum_{i=1}^n (X_{n,i} - \mu_{n,i}) \underset{n \to \infty}{\overset{D}{\longrightarrow}} \normalD(0,1)
$$

\end{thm}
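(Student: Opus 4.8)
The plan is to prove the classical Lindeberg CLT via the \emph{Lindeberg swapping trick}, comparing the sum to a matching Gaussian term by term (an alternative via characteristic functions would also work, but the swapping argument is cleaner and self-contained). First I would reduce to the standardized case: set $Y_{n,i} = (X_{n,i}-\mu_{n,i})/s_n$, so the $Y_{n,i}$ are independent, centered, and $\sum_{i=1}^n \Var(Y_{n,i}) = 1$; writing $S_n = \sum_{i=1}^n Y_{n,i}$, the goal becomes $S_n \overset{D}{\to} \normalD(0,1)$. Since the class $C_b^3(\reals)$ of bounded, thrice continuously differentiable functions with bounded derivatives is convergence-determining for weak convergence (its elements approximate bounded Lipschitz functions by mollification), it suffices to show $\E[f(S_n)] \to \E[f(N)]$ for every $f \in C_b^3$, where $N \sim \normalD(0,1)$.

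Next I would introduce independent Gaussians $Z_{n,i} \sim \normalD(0, \sigma_{n,i}^2/s_n^2)$, jointly independent and independent of the $Y_{n,i}$, so that $W_n = \sum_i Z_{n,i} \sim \normalD(0,1)$ \emph{exactly} (the variances sum to one). The replacement is done one coordinate at a time: with $U_k = \sum_{i<k} Y_{n,i} + \sum_{i>k} Z_{n,i}$, which is independent of both $Y_{n,k}$ and $Z_{n,k}$, telescoping gives
$$\E[f(S_n)] - \E[f(W_n)] = \sum_{k=1}^n \big(\E[f(U_k + Y_{n,k})] - \E[f(U_k + Z_{n,k})]\big).$$
Taylor-expanding $f$ around $U_k$ to second order, the linear and quadratic contributions cancel because $Y_{n,k}$ and $Z_{n,k}$ share the same first two moments ($\E[Y_{n,k}]=\E[Z_{n,k}]=0$ and $\E[Y_{n,k}^2]=\E[Z_{n,k}^2]=\sigma_{n,k}^2/s_n^2$) and are independent of $U_k$. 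The entire difference is therefore controlled by the third-order remainders:
$$\big|\E[f(S_n)] - \E[f(W_n)]\big| \le \sum_{k=1}^n \big(\E|R(U_k, Y_{n,k})| + \E|R(U_k, Z_{n,k})|\big).$$

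The heart of the argument, and the step I expect to be the main obstacle, is bounding the $Y$-remainder using the two complementary Taylor estimates $|R(U_k,h)| \le \tfrac{1}{6}\|f'''\|_\infty |h|^3$ and $|R(U_k,h)| \le \|f''\|_\infty h^2$, split at the Lindeberg threshold. On $\{|Y_{n,k}| \le \epsilon\}$ I would use the cubic bound together with $|Y_{n,k}|^3 \le \epsilon\, Y_{n,k}^2$, summing to $\tfrac{1}{6}\|f'''\|_\infty\,\epsilon\sum_k \E[Y_{n,k}^2] = \tfrac{1}{6}\|f'''\|_\infty\,\epsilon$; on $\{|Y_{n,k}| > \epsilon\}$ I would use the quadratic bound, for which $\sum_k \E[Y_{n,k}^2 \mathbb{1}_{|Y_{n,k}|>\epsilon}]$ is precisely the Lindeberg quantity $\tfrac{1}{s_n^2}\sum_k \E[(X_{n,k}-\mu_{n,k})^2 \mathbb{1}_{|X_{n,k}-\mu_{n,k}|>\epsilon s_n}] =: L_n(\epsilon) \to 0$. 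Matching the truncation in the remainder bound to the truncation in the Lindeberg hypothesis, so that the $\epsilon$-dependent bulk and the vanishing tail are isolated, is the delicate bookkeeping.

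Finally, for the Gaussian remainders I would establish the \emph{negligibility} consequence $\max_{k} \sigma_{n,k}^2/s_n^2 \to 0$: for any $\epsilon>0$ and each $k$, $\sigma_{n,k}^2/s_n^2 \le \epsilon^2 + \E[Y_{n,k}^2 \mathbb{1}_{|Y_{n,k}|>\epsilon}] \le \epsilon^2 + L_n(\epsilon)$, so $\limsup_n \max_k \sigma_{n,k}^2/s_n^2 \le \epsilon^2$ for every $\epsilon$, hence the limit is $0$. Then, using $\E|Z_{n,k}|^3 = c\,(\sigma_{n,k}^2/s_n^2)^{3/2}$ for the absolute third moment of a centered Gaussian, $\sum_k \E|Z_{n,k}|^3 \le c\,\big(\max_k \sigma_{n,k}/s_n\big)\sum_k \sigma_{n,k}^2/s_n^2 = c\,\max_k \sigma_{n,k}/s_n \to 0$. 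Combining the three bounds gives, for every fixed $\epsilon>0$, $\limsup_n |\E[f(S_n)]-\E[f(W_n)]| \le \tfrac{1}{6}\|f'''\|_\infty\,\epsilon$; letting $\epsilon \downarrow 0$ yields $\E[f(S_n)] \to \E[f(N)]$ for all $f \in C_b^3$, and the convergence-determining property of $C_b^3$ concludes that $S_n \overset{D}{\to} \normalD(0,1)$.
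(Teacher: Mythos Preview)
Your proposal is a correct and complete sketch of the standard Lindeberg replacement (swapping) proof. However, the paper does not actually prove this statement: it is quoted as a classical result and invoked as a black box, with the paper's own work confined to verifying the Lindeberg condition for the specific triangular array $X_{l,L}(x)=\eta_l\,\mu_{l,L}(x)$ (their \cref{Lemma:lindeberg_condition}) and then appealing to this theorem. So there is no ``paper proof'' to compare against; you have supplied a self-contained argument where the paper simply cites the result.

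That said, your route is the right one and the bookkeeping is handled carefully: the moment matching after the swap, the split of the $Y$-remainder at the $\epsilon$-threshold so that the tail is exactly the Lindeberg quantity $L_n(\epsilon)$, the derivation of uniform asymptotic negligibility $\max_k\sigma_{n,k}^2/s_n^2\to 0$ from the Lindeberg condition, and the cubic Gaussian-moment bound are all correctly stated. If anything, you could tighten the justification that $C_b^3$ is convergence-determining (a one-line mollification argument suffices), but nothing in the proposal is wrong or missing.
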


Given an input $x \in \reals^d$, the next lemma provides sufficient conditions for the Lindeberg's condition to hold for the triangular array of random variables $X_{l,L}(x) = \eta_l \mu_{l,L}(x)$. In this context, a scaled version of $\gamma_L(x)$ converges to a standard normal variable in the limit of infinite depth.

\begin{lemma}\label{Lemma:lindeberg_condition}
Let $x \in \reals^d$, and define $X_{l,L}(x) = \eta_l \, \mu_{l,L}(x)$ where $\mu_{l,L}$ is a deterministic mapping from $\reals^d$ to $\reals$, and let $\sigma_{l, L}^2(x) = \textrm{Var}_\delta[X_{l, L}(x)]$ for $l \in [L]$. Assume that 
\begin{enumerate}
    \item There exists $a \in (0,1/2)$ such that for all $L$, and $l \in [L]$, $p_l \in (a,1-a)$.
    \item 
    $\lim_{L\to \infty}\frac{\max_{k \in [L]} \mu_{k,L}^2(x)}{\sum_{l=1}^L \mu_{l,L}^2(x)} = 0$.
\end{enumerate}
Then for all $\epsilon>0$, we have that
$$
\lim_{L \to \infty} \frac{1}{s_L^2(x)} \sum_{l=1}^L \E[X_{l,L}(x)^2 1_{\{ |X_{l,L}(x)| > \epsilon s_L(x)\}}] = 0,
$$
where $s_L^2(x) = \sum_{l=1}^L \sigma_{l,L}^2(x)$.
\end{lemma}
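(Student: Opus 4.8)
The plan is to show something slightly stronger than a vanishing limit: under the two assumptions, the event $\{|X_{l,L}(x)| > \epsilon s_L(x)\}$ is \emph{empty} for every $l\in[L]$ once $L$ is large enough, so the Lindeberg sum is not merely $o(1)$ but identically zero for large $L$. The whole argument then reduces to two elementary uniform bounds fed into Assumption $2$.

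First I would record the two bounds. Since $\delta_l\in\{0,1\}$ and $p_l\in(0,1)$, we have $\eta_l=\delta_l-p_l\in\{-p_l,\,1-p_l\}$, hence $|\eta_l|\le 1$ deterministically; consequently
$$
|X_{l,L}(x)| = |\eta_l|\,|\mu_{l,L}(x)| \le |\mu_{l,L}(x)| \le \max_{k\in[L]}|\mu_{k,L}(x)|
$$
almost surely, uniformly in $l$. Second, Assumption $1$ gives $p_l(1-p_l)\ge a(1-a)$ for every $l$, because $p\mapsto p(1-p)$ is concave and attains its minimum over $[a,1-a]$ at the endpoints; therefore $\sigma_{l,L}^2(x)=p_l(1-p_l)\,\mu_{l,L}(x)^2\ge a(1-a)\,\mu_{l,L}(x)^2$, and summing, $s_L^2(x)=\sum_{l=1}^L\sigma_{l,L}^2(x)\ge a(1-a)\sum_{k=1}^L\mu_{k,L}(x)^2$.

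Next I would combine these. We may assume $\sum_{k}\mu_{k,L}(x)^2>0$ (otherwise $X_{l,L}(x)=0$ a.s.\ for all $l$ and the claim is trivial). On the event $\{|X_{l,L}(x)|>\epsilon s_L(x)\}$ we then have
$$
\max_{k\in[L]}\mu_{k,L}(x)^2 \ge |X_{l,L}(x)|^2 > \epsilon^2 s_L^2(x) \ge \epsilon^2 a(1-a)\sum_{k=1}^L\mu_{k,L}(x)^2,
$$
i.e.\ $\dfrac{\max_{k\in[L]}\mu_{k,L}(x)^2}{\sum_{k=1}^L\mu_{k,L}(x)^2} > \epsilon^2 a(1-a)$. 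By Assumption $2$ the left-hand side tends to $0$ as $L\to\infty$, while $\epsilon^2 a(1-a)>0$ is fixed; hence there is $L_0=L_0(\epsilon,x)$ such that for all $L\ge L_0$ this inequality fails, so the event $\{|X_{l,L}(x)|>\epsilon s_L(x)\}$ is empty for \emph{every} $l\in[L]$. For such $L$ each summand vanishes, giving
$$
\frac{1}{s_L^2(x)}\sum_{l=1}^L \E\big[X_{l,L}(x)^2\, 1_{\{|X_{l,L}(x)|>\epsilon s_L(x)\}}\big] = 0,
$$
and letting $L\to\infty$ proves the lemma.

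I do not anticipate a genuine obstacle: the only points needing care are the \emph{uniform-in-$l$} control of $|\eta_l|$ and of $p_l(1-p_l)$ from below (both immediate from the Bernoulli structure and Assumption $1$) and the trivial handling of the degenerate case $\mu_{\cdot,L}\equiv 0$. With this lemma, \cref{thm:asymptotic_normality_of_noise} follows by applying the triangular-array Lindeberg CLT to $X_{l,L}(x)$ and tracking the normalization: writing $\gamma_{\alpha,L}(x)=\sqrt{\alpha}\,(\alpha L)^{-1/2}\sum_l X_{l,L}(x)$ and using $s_L^2(x)/L\to v_{\alpha,\infty}(x)$ from Assumption $3$, the limiting variance is $\alpha\, v_{\alpha,\infty}(x)$.
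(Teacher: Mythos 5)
Your proof is correct and follows essentially the same route as the paper's: both arguments use the deterministic bound on $|\eta_l|$ together with the uniform lower bound $p_l(1-p_l)\ge a(1-a)$ to show that $\epsilon s_L(x)/\max_k|\mu_{k,L}(x)|\to\infty$ under Assumption 2, so the truncation events $\{|X_{l,L}(x)|>\epsilon s_L(x)\}$ are empty for all sufficiently large $L$ and the Lindeberg sum vanishes identically. Your explicit handling of the degenerate case $\sum_k\mu_{k,L}(x)^2=0$ is a small point of extra care not present in the paper.
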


The proof of \cref{Lemma:lindeberg_condition} is provide in \cref{sec:proof_lemma_lindeberg}.

\begin{corollary}\label{cor:asymptotic_normality}
Under the same assumptions of \cref{Lemma:lindeberg_condition}, we have that 
$$
\frac{1}{s_L(x)} \sum_{l=1}^L X_{l,L} \underset{L \to \infty}{\overset{D}{\longrightarrow}} \normalD(0,1)
$$
\end{corollary}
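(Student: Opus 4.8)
The plan is to obtain the corollary by applying Lindeberg's Central Limit Theorem for triangular arrays (stated just above) to the array $\big(X_{1,L}(x),\dots,X_{L,L}(x)\big)_{L\ge 1}$, the one nontrivial hypothesis being supplied by \cref{Lemma:lindeberg_condition}. First I would check the structural hypotheses of Lindeberg's theorem. For each fixed $L$, the variables $X_{l,L}(x)=\eta_l\,\mu_{l,L}(x)$, $l\in[L]$, are independent, since the coordinates of $\mask$ are independent Bernoulli and $\mu_{l,L}(x)$ is deterministic; each has finite mean $\E_\delta[X_{l,L}(x)]=\mu_{l,L}(x)\,\E_\delta[\eta_l]=0$ (because $\E[\delta_l]=p_l$) and finite variance $\sigma_{l,L}^2(x)=p_l(1-p_l)\,\mu_{l,L}^2(x)$. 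Hence the centered variables appearing in Lindeberg's statement are the $X_{l,L}(x)$ themselves, and the normalizer $s_L^2:=\sum_{l=1}^L\sigma_{l,L}^2(x)$ is exactly the quantity $s_L^2(x)$ of the corollary. For the ratio in the second condition to be well-defined we need $\sum_{l=1}^L\mu_{l,L}^2(x)>0$ for all large $L$; this holds throughout, and since $\sigma_{l,L}^2(x)\ge a(1-a)\,\mu_{l,L}^2(x)$ by the first condition, it also gives $s_L(x)>0$.

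The remaining (Lindeberg) condition is exactly the conclusion of \cref{Lemma:lindeberg_condition}, which I would invoke directly; for completeness, here is its short proof. Since $\delta_l\in\{0,1\}$ and $p_l\in(0,1)$, we have $|\eta_l|\le 1$, so $|X_{l,L}(x)|\le|\mu_{l,L}(x)|\le\big(\max_{k\in[L]}\mu_{k,L}^2(x)\big)^{1/2}$; and since $p\mapsto p(1-p)\ge a(1-a)$ on $(a,1-a)$, the first condition gives $s_L(x)\ge\sqrt{a(1-a)}\,\big(\sum_{k=1}^L\mu_{k,L}^2(x)\big)^{1/2}$. Therefore, whenever the event $\{|X_{l,L}(x)|>\epsilon\,s_L(x)\}$ is nonempty,
$$
\big(\max_{k\in[L]}\mu_{k,L}^2(x)\big)^{1/2}\ \ge\ |X_{l,L}(x)|\ >\ \epsilon\,s_L(x)\ \ge\ \epsilon\sqrt{a(1-a)}\,\Big(\sum_{k=1}^L\mu_{k,L}^2(x)\Big)^{1/2},
$$
i.e. $\max_{k\in[L]}\mu_{k,L}^2(x)\big/\sum_{k=1}^L\mu_{k,L}^2(x)>\epsilon^2 a(1-a)$. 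By the second condition this fails for all $L\ge L_0(x,\epsilon)$, so for such $L$ every indicator $1_{\{|X_{l,L}(x)|>\epsilon s_L(x)\}}$ is $0$, the Lindeberg sum $\sum_{l=1}^L\E[X_{l,L}(x)^2 1_{\{|X_{l,L}(x)|>\epsilon s_L(x)\}}]$ is identically $0$, and the limit is $0$. Lindeberg's CLT then yields $s_L(x)^{-1}\sum_{l=1}^L X_{l,L}(x)\;\overset{D}{\longrightarrow}\;\normalD(0,1)$, which is the corollary.

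There is no serious obstacle at the level of the corollary itself; the work is packaged in \cref{Lemma:lindeberg_condition}, and even there the argument is elementary. The points that need care are bookkeeping: verifying independence and the zero-mean property so that Lindeberg's theorem applies verbatim, excluding the degenerate case $s_L(x)=0$, and — in the lemma — keeping in mind that $\mu_{l,L}$ genuinely depends on $L$, so one cannot dominate the triangular array by a single fixed random variable and must instead run the uniform-negligibility estimate above separately for each $L$.
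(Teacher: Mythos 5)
Your proposal is correct and takes essentially the same route as the paper: the corollary is a direct application of the stated Lindeberg CLT for triangular arrays, with independence, zero mean, and finite variance of the $X_{l,L}(x)$ checked directly and the Lindeberg condition supplied by \cref{Lemma:lindeberg_condition}, whose proof you reproduce with the same uniform-negligibility estimate (your lower bound $p_l(1-p_l)>a(1-a)$ versus the paper's $a^2$ is an immaterial difference). Your added remarks on the nondegeneracy of $s_L(x)$ and on the $L$-dependence of $\mu_{l,L}$ are sensible bookkeeping but do not change the argument.
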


The proof of \cref{thm:asymptotic_normality_of_noise} follows from \cref{cor:asymptotic_normality}.

\subsection{Proof of \cref{Lemma:lindeberg_condition}}\label{sec:proof_lemma_lindeberg}

\begin{manuallemma}{\ref{Lemma:lindeberg_condition}}
Let $x \in \reals^d$, and define $X_{l,L}(x) = \eta_l \, \mu_{l,L}(x)$ where $\mu_{l,L}$ is a deterministic mapping from $\reals^d$ to $\reals$, and let $\sigma_{l, L}^2(x) = \textrm{Var}_\delta[X_{l, L}(x)]$ for $l \in [L]$. Assume that 
\begin{enumerate}
    \item There exists $a \in (0,1/2)$ such that for all $L$, and $l \in [L]$, $p_l \in (a,1-a)$.
    \item 
    $\lim_{L\to \infty}\frac{\max_{k \in [L]} \mu_{k,L}^2(x)}{\sum_{l=1}^L \mu_{l,L}^2(x)} = 0$.
\end{enumerate}
Then for all $\epsilon>0$, we have that
$$
\lim_{L \to \infty} \frac{1}{s_L^2(x)} \sum_{l=1}^L \E[X_{l,L}(x)^2 1_{\{ |X_{l,L}(x)| > \epsilon s_L(x)\}}] = 0,
$$
where $s_L^2(x) = \sum_{l=1}^L \sigma_{l,L}^2(x)$.
\end{manuallemma}

\begin{proof}
Fix $i \in [o]$. For $l\in [L]$, we have that $\sigma_{l,L}^2 = p_l(1-p_l) \mu_{l,L}^2$. Therefore,
$$
s_L^2 = \sum_{l=1}^L p_l(1-p_l) \mu_{l,L}^2
$$
Under conditions $1$ and $2$, it is straightforward that $s_L^2=\Theta(L)$.

To simplify our notation in the rest of the proof, we fix $x \in \reals^d$ and denote by $X_l: = X_{l,L}(x)$ and $\mu_{l} := \mu_{l,L}(x)$.
Now let us prove the result. Fix $\epsilon>0$. We have that
$$
\E[(X_l)^2 1_{\{ |X_i| > \epsilon s_L\}}] = \mu_{l}^2 \, \E\left[\eta_l^2 1_{\{ |\eta_l| > \frac{\epsilon s_L}{|\mu_l|}\}}\right]
$$
Using the fact that $|\eta_l| \leq \max(p_l, 1 - p_l)$, we have 
\begin{align*}
\E\left[\eta_l^2 1_{\{ |\eta_l| > \frac{\epsilon s_L}{|\mu_l|}\}}\right] &\leq p_l(1-p_l) 1_{\{ \max(p_l, 1 - p_l) > \frac{\epsilon s_L}{|\mu_l|}\}}\\
&\leq p_l(1-p_l) \zeta_{L}
\end{align*}
where $\zeta_{L} = 1_{\{ 1-a > \frac{\epsilon s_L}{\max_{k \in [L]}|\mu_k}\}}$.\\
Knowing that  
$$\frac{s_L}{\max_{k \in [L]}|\mu_k|} \geq a \sqrt{\frac{\sum_{l=1}^L \mu_l^2}{\max_{k \in [L]}\mu_k^2}},$$ 
where the lower bound diverges to $\infty$ as $L$ increases by assumption. We conclude that
$$
 \frac{1}{s_L^2} \sum_{l=1}^L \E[(X_l)^2 1_{\{ |X_l| > \epsilon s_L\}}] \leq \zeta_L \underset{L \to \infty}{\rightarrow} 0.
$$.
\end{proof}

\section{Full derivation of explicit regularization with \SD}\label{app:full_derivation_explicit_reg}
Consider a dataset $\data = \X \times \T$ consisting of $n$ (input, target) pairs $\{(x_i, t_i)\}_{1 \leq i \leq n}$ with $(x_i, t_i) \in \reals^d \times \reals^o$. Let $\ell:\reals^d \times \reals^o \to \reals$ be a smooth loss function, e.g. quadratic loss, crossentropy loss etc. Define the model loss for a single sample $(x,t) \in \data$ by
$$
\loss(\weights,x; \mask) =  \ell(y_{out}(x;\mask), t), \quad \loss(\weights,x) = \E_{\delta} \left[ \ell(y_{out}(x;\mask), t)\right],
$$
where $\weights=(W_l)_{0\leq l \leq L}$.
With \SD, we optimize the average empirical loss given by
$$
\loss(\weights) = \frac{1}{n} \sum_{i=1}^n \E_{\delta} \left[ \ell(y_{out}(x_i;\mask), t_i)\right]
$$

To isolate the regularization effect of \SD~on the loss function, we use a second order approximation of the loss function of the model, this will allow us to marginalize out the mask $\mask$. Let $z_l(x;\mask) = \Psi_l(W_l, y_{l-1}(x; \mask))$ be the activations. For some pair $(x,t) \in \data$, the second order Taylor approximation of $\ell(y_L(x), t)$ around $\mask = \ones = (1,\dots,1)$ is given by 
\begin{equation}
\begin{split}
\ell(y_{out}(x;\mask), z) \approx \ell(y_{out}(x;\ones), z) &+ \frac{1}{\sqrt{L}}\sum_{l=1}^L (\delta_l - 1) \langle z_l(x; \ones), \nabla_{y_l}[\ell\circ G_l](y_{l}(x;\ones)) \rangle \\
&+ \frac{1}{2L}\sum_{l=1}^L (\delta_l - 1)^2  z_l(x; \ones)^{T} \nabla^2_{y_l}[\ell\circ G_l](y_{l}(x;\ones)) z_l(x; \ones)\\
\end{split}
\end{equation}
where $G_l$ is the function defined by $y_{out}(x;\ones) = G_l(y_{l-1}(x;\ones) + \frac{1}{\sqrt{L}}z_l(x;\ones))$. 
Taking the expectation with respect to $\mask$, we obtain
\begin{equation}
\loss(\bm{W},x) \approx \bar{\loss}(\bm{W},x) + \frac{1}{2L} \sum_{l=1}^L p_l(1 - p_l) g_l(\weights, x)
\end{equation}
where $\bar{\loss}(\bm{W}, x) \approx \ell(y_{out}(x;\probs), t)$ (more precisely, $\bar{\loss}(\bm{W}, x)$ is the second order Taylor approximation of $\ell(y_{out}(x;\probs), t)$ around $\probs=1$\footnote{Note that we could obtain \cref{eq:approximation_loss} using the Taylor expansion around $\mask=\probs$. However, in this case, the hessian will depend on $\probs$, which complicates the analysis of the role of $\probs$ in the regularization term.}), and $g_l(\weights, x) =  z_l(x; \ones)^{T} \nabla^2_{y_l}[\ell\circ G_l](y_{l}(x;\ones)) z_l(x; \ones)$. 

The first term $\bar{\loss}(\weights, x)$ in \cref{eq:approximation_loss} is the loss function of the average network (i.e. replacing $\mask$ with its mean $\probs$). Thus, \cref{eq:approximation_loss} shows that training with \SD~ entails training the average network with an explicit regularization term that implicitly depends on the weights $\weights$.

\section{Implicit regularization and gradient noise}\label{app:implicit_regularization}
The results of \cref{thm:asymptotic_normality_of_noise} can be generalized to the gradient noise.
Adding noise to the gradient is a well-known technique to improve generalization. It acts as an implicit regularization on the loss function. \cite{neelakantan2015adding} suggested adding a zero mean Gaussian noise parameterized by its variance. At training time $t$, this translates to replacing $\frac{\partial \loss}{\partial w_t}$ by $\frac{\partial \loss}{\partial w_t} + \normalD(0, \sigma_t^2)$, where $w_t$ is the value of some arbitrary weight in the network at training time $t$, and $\sigma_t^2 = a (1 + t)^{-b}$ for some constants $a, b >0$. As $t$ grows, the noise converge to $0$ (in $\ell_2$ norm), letting the model stabilize in a local minimum. Empirically, adding this noise tends to boost the performance by making the model robust to over-fitting. Using similar perturbation analysis as in the previous section, we show that when the depth is large, \SD \emph{mimics this behaviour by implicitly adding a Gaussian noise to the gradient at each training step}.

Consider an arbitrary weight $w$ in the network, and let $h(x; \mask) = \frac{\partial \loss(x, \weights; \mask)}{\partial w}$ be the gradient of the model loss w.r.t $w$. $h(x;\mask)$ can be approximated using a first order Taylor expansion of the loss around $\mask=\ones$. We obtain,
\begin{equation}
\begin{split}
h(x; \mask) &= \frac{\partial \loss(x, \weights; \mask)}{\partial w}\\
&\approx \frac{\partial }{\partial w} \left( \loss(x, \weights; \ones) + \frac{1}{\sqrt{L}}\sum_{l=1}^L (\delta_l - 1) \langle z_l, \nabla_{y_l} [\ell \circ G_l](y_l(x;\ones)) \rangle \right)\\
&\approx \bar{h}(x) +  \frac{1}{\sqrt{L}}\sum_{l=1}^L \eta_l \, \frac{\partial}{\partial w} \langle z_l, \nabla_{y_l} [\ell \circ G_l](y_l(x;\ones)) \rangle 
\end{split}
\end{equation}



where $\eta_l = \delta_l - p_l$, and  $\bar{h}(x) =h(x; \ones) +  \frac{1}{\sqrt{L}}\sum_{l=1}^L (p_l - 1) \, \frac{\partial}{\partial w} \langle z_l, \nabla_{y_l} [\ell \circ G_l](y_l(x;\ones)) \rangle \approx h(x; \probs)$.

Let $\gamma_L(x) = \frac{1}{\sqrt{L}}\sum_{l=1}^L \eta_l \,  \frac{\partial}{\partial w} \langle z_l, \nabla_{y_l} [\ell \circ G_l](y_l(x;\ones)) \rangle$. With \SD, the gradient $h(x; \mask)$ can therefore be seen as a perturbation of the gradient of the average network $h(x;\probs)$ with a noise encoded by $\gamma_L(x)$. The scaling factor $1/\sqrt{L}$ ensures that $\gamma_L$ remains bounded (in $\ell_2$ norm) as $L$ grows. Without this scaling, the variance of $\gamma_L$ generally explodes.
The term $\gamma_L$ captures the randomness of the binary mask $\mask$, which resembles to the scaled mean in Central Limit Theorem and can be written as
$$
\gamma_L(x) = \frac{1}{\sqrt{L}} \sum_{l=2}^L X_{l,L}(x)
$$
where $X_{l,L}(x) = \eta_l \, \frac{\partial}{\partial w} \langle z_l, \nabla_{y_l} [\ell \circ G_l](y_l(x;\ones)) \rangle$. Ideally, we would like to apply Central Limit Theorem (CLT) to conclude on the Gaussianity of $\gamma_L(x)$ in the large depth limit. However, the random variables $X_l$ are generally not $i.i.d$ (they have different variances) and they also depend on $L$. Thus, standard CLT argument fails. Fortunately, there is a more general form of CLT known as Lindeberg's CLT which we use in the proof of the next theorem.

\begin{thm}[Asymptotic normality of gradient noise]\label{thm:asymptotic_normality_of_gradient_noise}
Let $x \in \reals^d$, and define $X_{l,L}(x) = \eta_l \, \mu_{l,L}(x)$ where $\mu_{l,L}(x)= \frac{\partial}{\partial w} \langle z_l, \nabla_{y_l} [\ell \circ G_l](y_l(x;\ones)\rangle$, and let $\sigma_{l, L}^2(x) = \textrm{Var}_\delta[X_{l, L}(x)] = p_l(1-p_l) \mu_{l,L}(x)^2$ for $l \in [L]$. Assume that 
\begin{enumerate}
    \item There exists $a \in (0,1/2)$ such that for all $L$, and $l \in [L]$, $p_l \in (a,1-a)$.
    \item 
    $\lim_{L\to \infty}\frac{\max_{k \in [L]} \mu_{k,L}^2(x)}{\sum_{l=1}^L \mu_{l,L}^2(x)} = 0$.
    \item $ v_{\infty}(x) := \lim_{L \to \infty} \frac{\sum_{l=1}^L \sigma_{l,L}^2(x)}{L}$ exists and is finite.
\end{enumerate}
Then,
$$
\gamma_L(x) \underset{L \to \infty}{\overset{D}{\longrightarrow}} \normalD(0, v_\infty(x))
$$
As a result, \SD~ implicitly mimics regularization techniques that adds Gaussian noise to the gradient.
\end{thm}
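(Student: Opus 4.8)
The plan is to recognize that \cref{thm:asymptotic_normality_of_gradient_noise} has precisely the same structure as \cref{thm:asymptotic_normality_of_noise}: only the meaning of the scalars $\mu_{l,L}(x)$ changes (they now involve the derivative $\tfrac{\partial}{\partial w}\langle z_l, \nabla_{y_l}[\ell\circ G_l](y_l(x;\ones))\rangle$ rather than $\langle z_l,\nabla_{y_l}G^i_l(y_l(x;\ones))\rangle$), while the probabilistic content is identical. Since the CLT machinery is already set up in \cref{Lemma:lindeberg_condition} and \cref{cor:asymptotic_normality}, the proof reduces to checking that their hypotheses apply and then running a rescaling (Slutsky) argument to identify the limiting variance.

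Concretely, I would first fix $x\in\reals^d$ and the weights $\weights$ (we are at a fixed training step, so $\weights$ is deterministic). Every $\mu_{l,L}(x)$ is then a deterministic scalar, being built only from quantities evaluated at $\mask=\ones$; hence the sole source of randomness in $X_{l,L}(x)=\eta_l\,\mu_{l,L}(x)$ is $\eta_l=\delta_l-p_l$, and the $\eta_l$ are independent across $l$ with zero mean and variance $\sigma_{l,L}^2(x)=p_l(1-p_l)\mu_{l,L}(x)^2$. Thus $\{X_{l,L}(x)\}_{1\le l\le L}$ is a triangular array of independent mean-zero variables of exactly the form treated in \cref{Lemma:lindeberg_condition}. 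Under assumptions $1$ and $2$, that lemma yields the Lindeberg condition for this array, and \cref{cor:asymptotic_normality} then gives
\[
\frac{1}{s_L(x)}\sum_{l=1}^L X_{l,L}(x)\;\underset{L\to\infty}{\overset{D}{\longrightarrow}}\;\normalD(0,1),\qquad s_L^2(x)=\sum_{l=1}^L\sigma_{l,L}^2(x).
\]

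It then remains to compare this normalized sum with $\gamma_L(x)=\tfrac{1}{\sqrt L}\sum_{l=2}^L X_{l,L}(x)$. Writing $\gamma_L(x)=\tfrac{s_L(x)}{\sqrt L}\cdot\tfrac{1}{s_L(x)}\sum_{l=1}^L X_{l,L}(x)-\tfrac{1}{\sqrt L}X_{1,L}(x)$, assumption $3$ gives $s_L^2(x)/L\to v_\infty(x)$, hence $s_L(x)/\sqrt L\to\sqrt{v_\infty(x)}$; and the leftover term has variance $\sigma_{1,L}^2(x)/L\le \tfrac14\,\mu_{1,L}^2(x)/L$, which tends to $0$ because assumptions $2$ and $3$ force $\mu_{1,L}^2(x)=o(s_L^2(x))=o(L)$, so it vanishes in probability. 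Slutsky's theorem then assembles these pieces into $\gamma_L(x)\underset{L\to\infty}{\overset{D}{\longrightarrow}}\normalD(0,v_\infty(x))$, and comparison with the Gaussian-noise-on-gradient scheme of \cite{neelakantan2015adding} gives the stated regularization interpretation. There is no genuine obstacle here given the earlier lemmas; the only care points are (i) confirming that $\mu_{l,L}(x)$ is deterministic so the array is independent, and (ii) disposing of the $l=1$ boundary term arising from the mismatch between the sum defining $\gamma_L(x)$ (starting at $l=2$) and the sums over all $l\in[L]$ used in $s_L$ and the hypotheses — which, as noted, contributes only an $L^{-1/2}$-order term absorbed by Slutsky.
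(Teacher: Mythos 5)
Your proposal is correct and follows essentially the same route as the paper, which simply states that the result ``follows from \cref{Lemma:lindeberg_condition} in a similar fashion to the proof of \cref{thm:asymptotic_normality_of_noise}''; you have merely filled in the Slutsky rescaling and the disposal of the $l=1$ boundary term that the paper leaves implicit. No discrepancy to report.
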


The proof of \cref{thm:asymptotic_normality_of_gradient_noise} follows from \cref{Lemma:lindeberg_condition} in a similar fashion to the proof of \cref{thm:asymptotic_normality_of_noise}.\\
Under the assumptions of \cref{thm:asymptotic_normality_of_gradient_noise}, training a ResNet with \SD~ entails adding a gradient noise $\gamma_L(x)$ that becomes asymptotically close (in distribution) to a Gaussian random variable. The limiting variance of this noise, given by $v_\infty(x)$, depends on the input $x$, which arises the question of the nature of the noise process $\gamma_L(.)$. It turns out that under some assumptions, $\gamma_L(.)$ converges to a Gaussian process in the limit of large depth. We show this in the next proposition
\newpage

\section{Further experimental results}\label{app:further_experimental_results}

\paragraph{Implementation details:} Vanilla Stable ResNet is composed of identical residual blocks each formed of a Linear and a ReLu layer. Stable ResNet110 follows \citep{he, huang2016stochasticdepth}; it comprises three groups of residual blocks; each block consists of a sequence of layers Convolution-BatchNorm-ReLU-Convolution-BatchNorm. We build on an open-source implementation of standard ResNets\footnote{https://github.com/felixgwu/img\_classification\_pk\_pytorch}. We scale the blocks using a factor $1/\sqrt{L}$ as described in \cref{Sec:large_width}. We use the adjective non-stable to qualify models where the scaling is not performed. 
The toy regression task consists of estimating the function 
$$ f_\beta : x \mapsto \sin (\beta^T x),$$
where the inputs $x$ and parameter $\beta$ are in $\mathbb{R}^{256}$, sampled from a standard Gaussian. The output is unidimensional. CIFAR-10, CIFAR-100 contain 32-by-32 color images, representing respectively 10 and 100 classes of natural scene objects. The models are learned in 164 epochs. The Stable ResNet56 and ResNet110 use an initial learning rate of 0.01, divided by 10 at epochs 80 and 120. Parameter optimization is conducted with SGD with a momentum of 0.9 and a batch size of 128. The Vanilla ResNet models have an initial learning rate of 0.05, and a batch size of 256. We use 4 GPUs V100 to conduct the experiments. In the results of \cref{Sec:large_width} and \ref{Sec:regularization}, the expectations are empirically evaluated using $500$ Monte-Carlo (MC) samples. The boxplots are also obtained using 500 MC samples.

\paragraph{The exploding gradient of Non-Stable Vanilla ResNet:} In \cref{growth_05} and \cref{growth_07}, we empirically validate \cref{prop:exploding_gradient}. We compare the empirical values of 
$$ \frac{1}{L-l}\log \tilde{q}_l(x,z)= \frac{1}{L-l}\log \E_{W,\bm{\delta}} \frac{\lVert \nabla_{y_l} \loss \rVert^2}{ \lVert \nabla_{y_L} \loss \rVert^2},$$
and compare it to the theoretical value (in parenthesis in the tables). We consider two different survival proportions. We see an excellent match between the theoretical value and the empirical one.

Proposition \ref{prop:exploding_gradient} coupled to the concavity of $\log(1+x)$ implies that at a constant budget, the uniform rate is the mode that suffers the most from gradient explosion.  Figures \ref{fig:gradgrowth_standard_0.5} and \ref{fig:gradgrowth_standard_0.7} illustrate this phenomenon. We can see that the gradient magnitude of the uniform mode can be two orders of magnitude larger than in the linear case.  However, the Stable scaling alleviates this effect; In Figure \ref{fig:gradgrowth_stable_0.7} we can see that none of the modes suffers from the gradient explosion anymore. 

\paragraph{Second order approximation of the loss:}
\begin{wraptable}{r}{0.4\textwidth}
 \vspace{-1em}
  \caption{Empirical verification of Equation (7) with Vanilla Resnet50 with width 256 and average survival probability $\bar L / L = 0.8$ with uniform mode.}
  \label{table:loss_summary}
  \centering
  \begin{tabular}{cccc}
    \toprule
     epoch & $\left| \frac{\mathcal{L} - \bar{\mathcal{L}}}{\mathcal{L}} \right|$ & $\left| \frac{\mathcal{L} - \bar{\mathcal{L}} - pen}{\mathcal{L}} \right|$ & Ratio \\
    \midrule
0 &	0.015 &	0.003 &	$\times 5.7$ \\ 
40 &	0.389 &	0.084 &	$\times 4.6$ \\
80 &	0.651 &	0.183 &	$\times 3.5$ \\
120 &	0.856 &	0.231 &	$\times 3.7$ \\
160 &	0.884 &	0.245 &	$\times 3.6$ \\
    \bottomrule
  \end{tabular}
\end{wraptable}

In Table \ref{table:loss_summary} we empirically verify the approximation accuracy of the loss (equation \eqref{eq:empirical_loss}). $\mathcal{L}$ is the loss that is minimized when learning with \SD . $\bar{\mathcal{L}}$ is the loss of the average model $y_{out}(x; \boldsymbol{p})$. The penalization term is $\frac{1}{2 L} \sum_{l=1}^L p_l(1 - p_l) g_l(\weights)$ (more details in Section \ref{Sec:regularization}). At initialization, the loss of the average model accurately represents the SD loss; the penalization term only brings a marginal correction. As the training goes, the penalization term becomes crucial; $\bar{\mathcal{L}}$ only represents $12\%$ of the loss after convergence. We can interpret this in the light of the fact that $\bar{\mathcal{L}}$ converges to zero, whereas the penalization term does not necessarily do. We note that the second-order approximation does not capture up to $25\%$ of the loss. We believe that this is partly related to the non-PSD term $\Gamma_l$ that we discarded for the analysis.

\paragraph{Further empirical verification of assumption 2 of \cref{thm:asymptotic_normality_of_noise}:} Under some assumptions, \cref{thm:asymptotic_normality_of_noise} guarantees the asymptotic normality of the noise $\gamma$. Further empirical verifications of assumption 2 are shown in \cref{fig:further_assumption2_1} and \cref{fig:further_assumption2_2}. The downtrend is consistent throughout training and modes, suggesting that assumption 2 is realistic. In \cref{fig:gaussian_output} we plot the distributions of the pvalues of two normality tests: the Shapiro–Wilk (\cite{shapiro1965analysis}) test and the the D'Agostino's $K^2$-tests (\cite{agostino1979test}).

\paragraph{Further empirical verification of the Budget Hypothesis:} We also compare the three modes (Uniform, Linear and SenseMode) in CIFAR10 and CIFAR100 for Stable Resnet56. The results are reported in \cref{tab:ResNet56}. These results confirm the observations discussed in the main text.

\begin{table}[h]
  \caption{Empirical verification of Proposition 1 with Vanilla Resnet50 with width 512 and average survival probability $\bar L / L = 0.5$. Comparison between the empirical average growth rate of the gradient magnitude against the theoretical value (between parenthesis) at initialization. }
  \label{growth_05}
  \centering
  \begin{tabular}{cccc}
    \toprule
     & Standard & Uniform & Linear \\
     $\ell$ & & & \\
    \midrule
0 &	2.003 (2) &	1.507 (1.5) &	1.433 (1.473) \\
10 & 2.002 (2) &	1.499 (1.5) &	1.349 (1.374) \\
20 & 2.001 (2) &	1.502 (1.5) &	1.248 (1.284) \\
30 & 2.002 (2) &	1.504 (1.5) &	1.207 (1.191) \\
40	& 2.002 (2) &	1.542 (1.5) &	1.079 (1.097)\\
    \bottomrule
  \end{tabular}
\end{table}

\begin{table}[h]
  \caption{Empirical verification of Proposition 1 with Vanilla Resnet50 with width 512 and average survival probability $\bar L / L = 0.7$. Comparison between the empirical average growth rate of the gradient magnitude against the theoretical value (between parenthesis) at initialization. }
  \label{growth_07}
  \centering
  \begin{tabular}{cccc}
    \toprule
     & Standard & Uniform & Linear \\
     $\ell$ & & & \\
    \midrule
0 &	2.001 (2) &	1.705 (1.7)&	1.694 (1.691) \\
10 & 2.001 (2)&	1.708 (1.7)&	1.633 (1.629)\\
20 & 2.001 (2)&	1.707 (1.7)&	1.569 (1.573) \\
30 & 2.001 (2)&	1.716 (1.7)&	1.555 (1.516) \\
40 & 1.999 (2)&	1.739 (1.7)&	1.530 (1.459)\\
    \bottomrule
  \end{tabular}
\end{table}

\begin{figure}[h]	
	\centering
	\begin{subfigure}[t]{.48\textwidth}
		\centering
		\includegraphics[width=\textwidth]{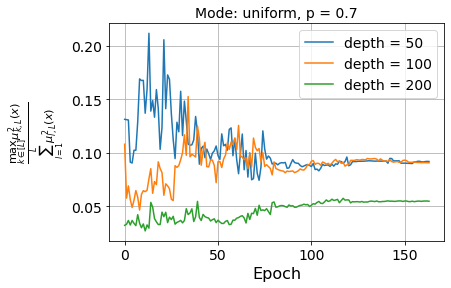}
		\caption{Uniform mode}	
	\end{subfigure}
	\quad
	\begin{subfigure}[t]{.48\textwidth}
		\centering
		\includegraphics[width=\textwidth]{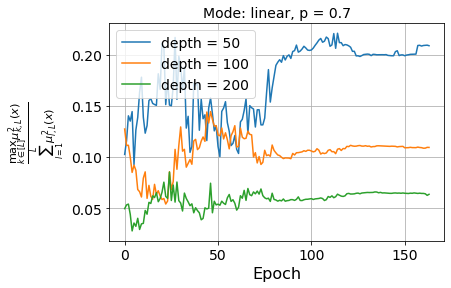}
		\caption{Linear mode}
	\end{subfigure}
	\caption{Empirical verification of assumption 2 of Theorem 2 on Vanilla ResNet with width 256 with average survival probability $\bar L/ L = 0.7$.}
	\label{fig:further_assumption2_1}
\end{figure}

\begin{figure}[h]	
	\centering
	\begin{subfigure}[t]{.48\textwidth}
		\centering
		\includegraphics[width=\textwidth]{images/lemma_4_unif_0.5.png}
		\caption{Uniform mode}	
	\end{subfigure}
	\quad
	\begin{subfigure}[t]{.48\textwidth}
		\centering
		\includegraphics[width=\textwidth]{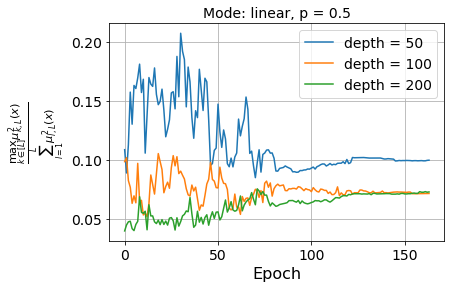}
		\caption{Linear mode}
	\end{subfigure}
	\caption{Empirical verification of assumption 2 of Theorem 2 on Vanilla ResNet with width 256 with average survival probability $\bar L/ L = 0.5$.}
	\label{fig:further_assumption2_2}
\end{figure}

\begin{figure}[h]	
	\centering
	\begin{subfigure}[t]{.48\textwidth}
		\centering
		\includegraphics[width=\textwidth]{images/shapiro_d100_p07_uniform.png}
	\end{subfigure}
	\quad
	\begin{subfigure}[t]{.48\textwidth}
		\centering
		\includegraphics[width=\textwidth]{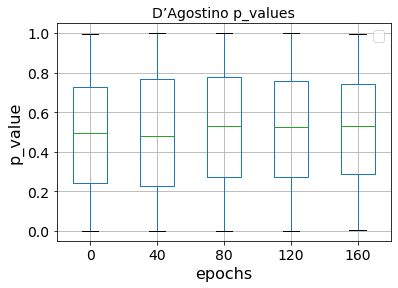}
	\end{subfigure}
	\caption{Empirical verification of Theorem 2 on Vanilla ResNet100 with width 128 with average survival probability $\bar L/ L = 0.7$ and uniform mode. Distribution of the p-values for two normality tests: Shapiro and D'Agostino's tests}
	\label{fig:gaussian_output}
\end{figure}

\begin{table}
\caption{\small{Comparison of the modes of selection of the survival probabilities with fixed budget with Stable ResNet56.}}
    \begin{subtable}{.4\textwidth}
  \label{growth_05}
  \centering
  \begin{tabular}{cccc}
    \toprule
     $\bar L / L$ & Uniform & SenseMode & Linear \\
    \midrule
0.1 & 24.58  $\pm $ 0.3  & \textbf{2.93} $\pm $ 0.4 & $-$\\
0.2 & 13.85  $\pm $ 0.3 & \textbf{11.72} $\pm $ 0.3 & $-$\\
0.3 & 10.23  $\pm $ 0.2 & \textbf{8.59} $\pm $ 0.4 & $-$\\
0.4 & \textbf{8.49} $\pm $ 0.2 & \textbf{8.23} $\pm $ 0.3 & $-$\\
0.5 & \textbf{8.38} $\pm 0.2$ & \textbf{8.25} $\pm $ 0.3 & 12.01 $\pm $ 0.3 \\
0.6 & \textbf{7.34} $\pm 0.3$ & 8.17 $\pm $ 0.2  & 9.26 $\pm $ 0.2  \\
0.7 & \textbf{8.03} $\pm $ 0.1 & 8.20 $\pm $ 0.1  & 8.30 $\pm $  0.1  \\
0.8 & \textbf{6.48} $\pm$ 0.1 & 7.55 $\pm $ 0.1  & 6.89 $\pm $ 0.2 \\
0.9 & 7.16 $\pm$ 0.1 & 7.81 $\pm $ 0.1  & \textbf{6.62} $\pm $ 0.1 \\
\midrule
1 & \multicolumn{3}{c}{$7.10 \pm 0.1$} \\
    \bottomrule
  \end{tabular}
  \caption{CIFAR10 with ResNet56}
    \end{subtable}%
    \hfill
    \begin{subtable}{.4\textwidth}
  \label{growth_05}
  \centering
  \begin{tabular}{cccc}
    \toprule
     $\bar L / L$ & Uniform & SenseMode & Linear \\
    \midrule
0.1 & 61.98	$\pm $ 0.3 & 60,27 $\pm $ 0.2 & $-$\\
0.2 & 47.24	$\pm $ 0.2 & 45.74 $\pm $ 0.3 & $-$\\
0.3 & 39.38	$\pm $ 0.2 & 37,11 $\pm $ 0.2 & $-$\\
0.4 & 35,54	$\pm $ 0.2 & 33,71 $\pm $ 0.4 & $-$\\
0.5 & 32.32	$\pm $ 0.1 & 31 $\pm $ 0.3 & 40,71 $\pm$ 0.2 \\
0.6 & 29.57	$\pm $ 0.1 & 30.19 $\pm $ 0.3	& 34.13 $\pm $ 0.1 \\
0.7 & 28.49	$\pm $ 0.4 & 29.69 $\pm $ 0.1 & 30.14 $\pm $ 0.1  \\
0.8 & 27.23	$\pm $ 0.2 & 29.31 $\pm $ 0.2 & 28.34 $\pm $ 0.2  \\
0.9 & 27.01	$\pm $ 0.1 & 29,45 $\pm $ 0.2 & 27,35 $\pm $ 0.2  \\
\midrule
1 & \multicolumn{3}{c}{ 28.93 $\pm$ 0.5} \\
    \bottomrule
  \end{tabular}
    \caption{Cifar100 with ResNet56}
    \end{subtable}
    \label{tab:ResNet56}
\end{table}

\end{document}